
\typeout{KR2021 Instructions for Authors}


\documentclass{article}
\pdfpagewidth=8.5in
\pdfpageheight=11in

\usepackage{kr}

\usepackage{times}
\usepackage{soul}
\usepackage{url}
\usepackage[hidelinks]{hyperref}
\usepackage[utf8]{inputenc}
\usepackage[small]{caption}
\usepackage{graphicx}
\usepackage{amsmath}
\usepackage{amsthm}
\usepackage{booktabs}
\usepackage{algorithm}
\usepackage{algorithmic}
\urlstyle{same}


\newtheorem{example}{Example}
\newtheorem{theorem}{Theorem}
\newtheorem{lemma}{Lemma}



\pdfinfo{
/TemplateVersion (KR.2021.0)
}

\usepackage{amssymb}
\usepackage{tikz}
\usepackage[normalem]{ulem}

\theoremstyle{definition}
\newtheorem{proposition}{Proposition}


\newcommand{\SPO}{\mathcal{O}} 
\newcommand{\CHN}{\mathcal{C}} 
\newcommand{\TPRE}{\mathcal{T}} 

\newcommand{\p}{\pi} 
\renewcommand{\d}{\delta} 
\renewcommand{\o}{o} 
\newcommand{\cyc}[2]{\kappa_{#1}^{#2}} 
\newcommand{\cycfree}[2]{\alpha_{#1}^{#2}} 
\newcommand{\upper}[1]{\lfloor{#1}\rfloor} %
\newcommand{\upperd}[1]{\lfloor{#1}\rfloor^\mathsf{D}} %

\newcommand{\en}{\triangleright} 
\newcommand{\add}{{\operatorname{add}}} 

\newcommand{\dir}{\delta} 
\newcommand{\lvl}{\mathit{lvl}} 
\newcommand{\tightplus}[2]{{#1}{+}{#2}} 
\newcommand{\drc}[1]{({#1},\tightplus{#1}{1})} 
\newcommand{\as}{a} 

\newcommand{\ppp}[1]{\mathrm{P}_{#1}} 

\title{An AGM Approach to Revising Preferences}

\author{%
Adrian Haret$^1$\and
Johannes P. Wallner$^2$\\
\affiliations
$^1$ILLC, University of Amsterdam\\
$^2$TU Graz\\
\emails
a.haret@uva.nl,
wallner@ist.tugraz.at
}

\begin{document}

\maketitle

\begin{abstract}
    We look at preference change arising out of an interaction between two elements:
	the first is an initial preference ranking encoding a pre-existing attitude;
	the second element is new preference information signaling input from an authoritative source,
	which may come into conflict with the initial preference.
	The aim is to adjust the initial preference and bring it in line with the new preference, 
 	without having to give up more information than necessary.
 	We model this process using the formal machinery of belief change, 
 	along the lines of the well-known AGM approach.
	We propose a set of fundamental rationality postulates, 
    and derive the main results of the paper: a set of representation theorems 
 	showing that preference change according to these postulates 
    can be rationalized as a choice function guided by
 	a ranking on the comparisons in the initial preference order.
    We conclude by presenting operators satisfying our proposed postulates. 
 	Our approach thus allows us to situate preference revision within 
    the larger family of belief change operators.
\end{abstract}

\section{Introduction}\label{sec:introduction}
Preferences play a central role in theories of decision making,
as part of the mechanism underlying intentional behavior and rational choice,
both in economic models of rational agency 
as well as in formal models of artificial agents 
supposed to interact with the world and each other \cite{BoutilierBDHP04,DomshlakHKP11,RossiVW11,PigozziTV16}.
Since such interactions take place in dynamic environments, 
it can be expected that preferences change in response to new developments.

In this paper we are interested in preference change 
occurring when new preference information,
denoted by $\o$, becomes available and has to be taken at face value, 
thereby prompting a change in prior preference information, denoted by $\p$.
The change, we require, should preserve as much useful information from $\p$
as can be afforded.

Preference change thus described is a pervasive phenomenon, 
arising in many contexts spanning the realms of both human and artificial agency.
Thus, there is a distinguished tradition in Economics and Philosophy
that looks at examples of conflict between an agent's subjective preference
(what we call here the initial, or prior preference $\p$) 
and a second-order preference,
often standing for a commitment or moral rule 
(what we call here the new preference information $\o$):
subjective versus `ethical' preferences \cite{Harsanyi55},
lack of will, or \emph{akrasia} \cite{Jeffrey74},
moral commitments \cite{Sen77},
second-order volitions \cite{Frankfurt88} 
and second-order preferences \cite{Nozick94}
all fall under this heading.

The same challenge can occur in technological applications, 
from updating CP-nets~\cite{CadilhacALB15}
to changing the order in which search results 
are displayed on a page in response to user provided specifications.
At the same time, similar topics are emerging in the discussion 
on ethical decision making for artificial agents \cite{RossiM19}
and in issues related to the \emph{alignment problem} \cite{Russell2019}:
an artificial agent dealing with humans will have to learn their preferences, 
but as it cannot do so instantaneously, it must presumably acquire the relevant information 
in intermediate steps, revising along the way.

Thus, whether it is the internal conflict between 
an agent's private leanings and the better angels of its nature,
or a content provider wanting to tailor its products 
for a better user experience, 
many cases of preference change involve a conflict
between two types of preferences, one of which is 
perceived as having priority over the other.
However, even though the need to reconcile conflicting preferences in favor of one of them 
is widely acknowledged,
a concrete mechanism for resolving preference conflicts, that works for general preference 
orders, is often overlooked.

In keeping with prominent approaches to belief change, which 
model rational change using a plausibility relation over the states of affairs 
undergoing revision, and echoing a suggestion of Amartya Sen to the effect that conflicts among preferences 
can be understood using rankings over the preferences themselves \cite{Sen77}, 
we propose formalizing preference change using preferences over the basic elements 
of a preference order, as illustrated in the following example.

\begin{figure}\centering
	\begin{tikzpicture}
	\node at (0,-0.5){$\p$};
	\node at (0,1.8)(1){$1$};
	\node at (0,0.9)(2){$2$};
	\node at (0,0)(3){$3$};
	\path[-latex](1)edge(2)(2)edge(3)(1)edge[bend right,dotted](3);

	\node at (1,-0.5){$\o$};
	\node at (1,1.8)(1){$1$};
	\node at (1,0.9)(2){$2$};
	\node at (1,0)(3){$3$};
	\path[-latex](3)edge[bend right](1);

	\node at (3,-0.5){$\p_1$};
	\node at (3,1.8)(1){$1$};
	\node at (3,0.9)(2){$2$};
	\node at (3,0)(3){$3$};
	\path[-latex](3)edge[bend right](1)(1)edge(2)(3)edge[dotted](2);

	\node at (4,-0.5){$\p_2$};
	\node at (4,1.8)(1){$1$};
	\node at (4,0.9)(2){$2$};
	\node at (4,0)(3){$3$};
	\path[-latex](3)edge[bend right](1)(2)edge(3)(2)edge[dotted](1);

	\node at (5,-0.5){$\p_3$};
	\node at (5,1.8)(1){$1$};
	\node at (5,0.9)(2){$2$};
	\node at (5,0)(3){$3$};
	\path[-latex](3)edge[bend right](1);
	\end{tikzpicture}		
	\caption{
		Revising preference order $\p$ by $\o$: simply adding $\o$ to $\p$ leads to a cycle,
		so if $\o$ is accepted then a choice needs to be made regarding which of the initial comparisons of $\p$
		to keep;
		potential candidates for the revised order are $\p_1$, $\p_2$ or $\p_3$.
		A direct comparison ranking $i$ better than $j$ is	depicted by a solid arrow from $i$ to $j$, 
		with comparisons inferred by transitivity depicted by dotted arrows. }
	\label{fig:preferences-3-items}
\end{figure}
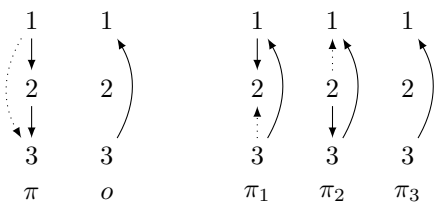

\begin{example}\label{ex:preferences-3-items}
	The initial preference $\p$ is such that,
	as a result of explicit assertion, item $1$ is ranked better than $2$ and $2$ is ranked better than $3$;
	by virtue of transitivity, it is also inferred that $1$ is considered better than $3$.
	We want to revise $\p$ by a preference $\o$, according to which $3$ is better than $1$
	(see Figure \ref{fig:preferences-3-items}). 
	The simplest solution is to add $\o$ to $\p$ 
	(i.e., include the comparisons contained in both),
	but the transitivity requirement leads to a cycle between $1$, $2$ and $3$, 
	which we would like to avoid.
	We are thus in a situation where $\p$ and $\o$ cannot be jointly accepted,
	but since $\o$, we stipulate, must be accepted,
	something must be given up from $\p$ (though,
	we ask, no more than strictly necessary). How is the decision to be made?
	We suggest that an implicit preference over the comparisons of $\p$ that were explicitly provided
	can provide an answer: if the comparison of $1$-vs-$2$ 
	(the edge from $1$ to $2$ in Figure \ref{fig:preferences-3-items})
	is preferred to the one of $2$-vs-$3$ then the result is $\p_1$, which
	holds on to $1$-vs-$2$ from $\p$ and together with $\o$ infers, by transitivity, 
	that $3$ is better than $2$;
	alternatively, a preference for $2$-vs-$3$ over $1$-vs-$2$
	leads to $\p_2$ as the result, while indifference between the two comparisons
	means that both are given up, resulting in $\p_3$.
	Thus, preference over comparisons in $\p$ translates as choice over how to go about revising $\p$.
	Interestingly, we may also reason in the opposite direction:
	observing choice behavior across different instances of revision allows us to infer preferences 
	over comparisons in $\p$, e.g., revising to $\p_1$, rather than to $\p_2$ or $\p_3$,
	can be rationalized as saying that the comparison of $1$-vs-$2$
	is considered better than $2$-vs-$3$.
\end{example}

\noindent
Our purpose here is to formalize the type of reasoning illustrated in Example \ref{ex:preferences-3-items}
by rationalizing preference change as a type of choice function on what we will call
the \emph{direct comparisons of $\p$},
i.e., the explicit preferences assumed to be given in $\p$.
Since a conflict between $\p$ and $\o$ forces some of the direct comparisons of $\p$
to be renounced, additional information in the form of a preference order over the direct comparisons
of $\p$ will serve as guide to the choice function.
The aim, in this, is not legislate on what is the right choice to make;
rather, it is to make sure that whatever the choice is, it is made in a coherent way.

\paragraph{Contributions.}
We present a mechanism for revising a preference order $\p$ 
that is based on an underlying preference relation over the basic, atomic 
comparisons of $\p$. This mechanism proceeds sequentially, 
by working its way through the underlying preference relation and adding 
as many of the direct comparisons of $\p$ as possible, while avoiding a conflict with $\o$.
We present a set of conditions under which the preference order on direct comparisons of $\p$ 
exists and has desired properties, 
and characterize the revision mechanism using a set of intuitive normative principles, 
i.e., rationality postulates in the AGM mould \cite{AlchourronGM85}.
The significance of our approach lies in laying bare the theoretical requirements
and basic assumptions for mechanisms intended to revise preferences.

\paragraph{Related work.}
Our work complements existing research, but manages to occupy a distinct niche in a broader landscape.
Some previous work labeled as preference revision \cite{Bradley07,LangT08,Liu11}, 
looks at changes in preferences prompted by a change in beliefs.
Here we abstract away from the source of the new information, choosing to focus 
exclusively on a mechanism that can be used for resolving conflicts:
the rational thing to do when knowing that, for some reason or other, one's preference has to change.
Other work \cite{CadilhacALB15} describes preference change when 
preferences are represented using CP-nets \cite{BoutilierBDHP04},
or dynamic epistemic logic \cite{BenthemL14}, 
in the context of declarative debugging \cite{DellP2005},
or databases \cite{Chomicki03},
and therefore comes 
with additional structural constraints.
In contrast, we have opted to represent preferences as 
strict partial orders over a set of items: we believe this straightforward 
formulation allows the basic issue signaled by Amartya Sen \cite{Sen77},
to be visible and tackled head on.

Apart from the issues raised in the Economics literature about second-order desires 
\cite{Harsanyi55,Jeffrey74,Sen77,Frankfurt88,Nozick94},
the basic phenomenon of preference change has also been raised
in explicit connection to belief change \cite{Hansson95,Grune-YanoffH09a,Grune-Yanoff13},
but a representation in terms of preferences on the comparisons present in 
the preference orders, along the lines suggested here, has, to the best of our knowledge, not yet been given.
Much existing work proceeds by putting forward some concrete preference revision mechanism, 
possibly by shifting some elements of the original preference around,
and occasionally with a remark on the similarity between this operation
and a belief revision operation \cite{Freund04,ChomickiS05,Liu11,MaBL12}. 
What our work adds to these models is an analysis in terms of postulates 
and representation results.

The postulates we put forward for preference revision bear a distinct resemblance to the AGM postulates 
employed for belief revision
\cite{AlchourronGM85,KatsunoM92,FermeH18}:
given that changing one's mind involves choosing some parts of a belief to keep and some to remove,
this is no coincidence. 
Indeed, the two problems are similar, though the structural particularities of preferences
(in particular, the requirement that they are transitive) 
mean that transfer of insights from belief revision to preference revision is by no means straightforward.

\paragraph{Outline.}
The rest of the paper is structured as follows. 
In Section \ref{sec:preliminaries}
we introduce notation and the basic elements of our model.
In Section \ref{sec:semantics} we provide a constructive way of revising a preference order,
based on rankings of the direct comparisons.
In Section \ref{sec:postulates} we provide a set of intuitive postulates together with their motivations, and discuss 
their appropriateness for the purpose of modelling preference revision.
In Section \ref{sec:coordination} we identify a set of conditions under which these postulates can be applied.
In Section \ref{sec:representation} we show that the postulates 
presented in Section \ref{sec:postulates} characterize the procedure described in Section \ref{sec:semantics}.
Section \ref{sec:concrete-operator} discusses concrete preference revision operators,
and Section \ref{sec:conclusion} offers concluding remarks.

\section{Preliminaries}\label{sec:preliminaries}
We assume a finite set $V$ of items, 
standing for the objects an agent can have preferences over.
If $\p$ is a binary relation on a set $V$ of items,
then $\p$ is a \emph{strict partial order (spo) on $V$}
if $\p$ is transitive and irreflexive,
and we write $\SPO_V$ for the set of strict partial orders on $V$.
If $\p$ is an spo on a set $V$ of items, 
then $\p$ is a \emph{strict linear order on $V$} if $\p$
is also total, in addition to being transitive and irreflexive.
A \emph{chain on $V$} is a strict linear order on a subset of $V$.
We write $\CHN_V$ for the set of chains on $V$.
FInally, $\p$ is a total preorder on $V$ if $\p$ is transitive and total, 
with $\TPRE_V$ being the set of total preorders on $V$. Note that in the following we will 
typically be interested in total preorders on $V\times V$, 
i.e., total preorders on the set of comparisons of items in $V$.

If $\p$ is an spo on a set of items $V$,
then a \emph{comparison $(i,j)$ of $\p$} 
is an element $(i,j)\in\p$, for some items $i,j\in V$,
interpreted as saying that, in the context of $\p$, 
$i$ is considered strictly better than $j$.
To simplify notation, 
we sometimes also refer to comparisons with the letter $c$.
We often have to consider the union $\p_1\cup\p_2$ of two spos,
which is not guaranteed to be an spo, 
since transitivity is not preserved under unions.
If this is the case, we typically have to 
substitute $\p_1\cup\p_2$ for its 
\emph{transitive closure}, denoted by $(\p_1\cup\p_2)^+$.
Since preferences are required to be transitive, we write a sequence of comparisons
$\{(1,2),(2,3)\dots,(m{-}1,m)\}^+$ as $(1,\dots, m)$.

If $\p = (i_1,\dots, i_m)$ is a chain on $V$,
a \emph{direct comparison of $\p$} is a comparison 
$(i_k,i_{k+1})\in\p$,
i.e., a comparison between $i_k$ and its direct successor in $\p$,
with $\dir_\p$ being the set of direct comparisons of $\p$.
The assumption is that direct comparisons are the result of explicit information, 
and are basic in the sense that they cannot 
be inferred by transitivity using other comparisons in $\p$.
Given preference orders $\p\in\CHN_V$ and $\o\in\SPO_V$, we want to carve out the
possible options for the revision of $\p$ by $\o$.
For this we use the set $\upper{\o}_\p$ of \emph{$\p$-completions of $\o$}, defined as:
$$
\upper{\o}_\p = \{(\o\cup\d)^+\in\SPO_V\mid\d\subseteq\dir_\p\}.
$$
The intuition is that a $\p$-completion of $\o$ is a preference order
constructed from $\o$ using some, and only, direct comparisons in $\p$, i.e., information
originating exclusively from the two sources given as input.
We will expect that a preference revision operator selects one element of this set as
the revision result.

Though taking $(\p\cup\o)^+$ as the result of revising $\p$ by $\o$ is not, in general, feasible,
we still want to identify parts of $(\p\cup\o)^+$ that are uncontroversial.
To that end, the \emph{cycle-free part $\cycfree{\p}{\o}$ of $(\p\cup\o)^+$} is defined as:
$$
\cycfree{\p}{\o} = \{\drc{i}\in(\p\cup\o)^+\mid(\tightplus{i}{1},i)\notin(\p\cup\o)^+\},
$$
i.e., the set of comparisons of $(\p\cup\o)^+$ not involved in a cycle with the comparisons of $\o$.
The \emph{cyclic part $\cyc{\p}{\o}$ of $\p$ with respect to $\o$} is defined as:
$$
\cyc{\p}{\o} = \{\drc{i}\in\dir_\p\mid(\tightplus{i}{1},i)\in(\p\cup\o)^+\},
$$
i.e., the set of direct comparisons of $\p$ involved in a cycle with $\o$.

\begin{example}\label{ex:completions}
	For $\p$ and $\o$ as in Example \ref{ex:preferences-3-items}, 
	we have that $\dir_\p=\{(1,2),(2,3)\}$,
	while the $\p$-completions of $\o$ are 
	$\upper{\o}_\p=\{(3,1,2), (2,3,1), (3,1)\}$,
	i.e., the spos obtained by adding to $\o$ either of the elements of $\dir_\p$, or none
	(corresponding to $\p_1$, $\p_2$ and $\p_3$).
	The cyclic part of $\p$ with respect to $\o$ is 
	$\cyc{\p}{\o}=\dir_\p=\{(1,2),(2,3)\}$
	and
	the cycle-free part of $\p$ with respect to $\o$ is
	$\cycfree{\p}{\o}=\emptyset$.
\end{example}

\section{A General Method for Revising Preferences}\label{sec:semantics}
A \emph{preference revision operator $\en$}
is a function
$\en\colon\CHN_V\times\SPO_V\rightarrow\SPO_V$ 
taking a chain $\p$ and an spo $\o$ as input, 
and returning an spo $\p\en\o$ as output.

The choice of input and output can be motivated by 
imagining that $\p$ stands for an existing priority ranking, 
e.g., the ordering of items on a webpage, 
whereas the new information $\o$ is provided by a user 
and is more likely to be incomplete.

In addition, we may look at this in light of the material that is to come: 
since we will be rationalizing preference revision operators using preferences (i.e., preorders) on comparisons,
an spo as output reflects the fact that certain comparisons are 
considered equally good, and must be given up together. 
The unfortunate effect of this is that the input and output formats do not match,
which makes it unclear, at this point, whether we can iterate the revision operation.
That being said, the output can (and will) be tightened to a chain: provided that the preferences 
guiding revision are a linear order (i.e., there are no ties). 
We touch on this aspect at the end of Section \ref{sec:representation}.

We start, then, by presenting a general procedure for revising preferences that, as advertised,
utilizes total preorders on the set $\dir_\p$ of direct comparisons of $\p$:
thus, a \emph{preference assignment $\as$} is a function 
$\as\colon\CHN_V\rightarrow\TPRE_{V\times V}$ mapping
every preference $\p\in\CHN_V$ to a 
total preorder $\le_\p$ on elements of $V\times V$, 
i.e., on pairwise comparisons on the items of $V$, 
of which we are interested only in the preorder on $\dir_\p$.
In typical AGM manner, a comparison $c_i\le_\p c_j$ 
in the context of a preorder $\le_\p$ on $\dir_\p$ means
that $c_i$ is \emph{better} than $c_j$.

If $\p\in\CHN_V$, $\o\in\SPO_V$ and $\le_\p$ is a total preorder on 	$\dir_\p$,
then, for $i\geq 1$, the \emph{$\le_\p$-level $i$ of $\dir_\p$}, denoted $\lvl_\le^i(\dir_\p)$,
contains the $i^\text{th}$ best elements of $\dir_\p$ according to $\le_\p$,
i.e., 
$\lvl^1_{\le_\p}(\dir_\p)=\min_{\le_\p}(\dir_\p)$,
$\lvl^{i+1}_{\le_\p}(\dir_\p)=\min_{\le_\p}(\dir_\p\setminus\bigcup_{1\leq j\leq i}\lvl^j_{\le_\p}(\dir_\p))$, etc.
Note that the $\le_\p$-levels of $\dir_\p$ partition $\dir_\p$ and, since $\dir_\p$ is finite,
there exists a $j>0$ 
such that $\lvl^i_{\le_\p}(\dir_\p)=\emptyset$, for all $i\ge j$.
The \emph{addition operator $\add^i_{\le_\p}(\o)$} is defined, 
for any $\o\in\SPO_V$ and $i\geq 0$, as follows:
\begin{align*}
\add_{\le_\p}^0(\o) &= (\o\cup\cycfree{\p}{\o})^+,\\
\add_{\le_\p}^i(\o) &= 
\begin{cases}
(\add^{i-1}_{\le_\p}(\o)\cup(\lvl_{\le_\p}^i(\dir_\p)\cap\cyc{\p}{\o}))^+,~\text{if in}~\SPO_V,\\
\add_{\le_\p}^{i-1}(\o),~\text{otherwise}.
\end{cases}
\end{align*}

\noindent
Intuitively, the addition operator starts by adding to $\o$ 
all the direct comparisons of $\p$ that are not involved in a cycle with it,
i.e., which are not under contention by the accrual of new preference information.
Then, at every further step $i>0$, the addition operator 
tries to add all comparisons on level $i$ of $\dir_\p$
that are involved in a cycle with $\o$:
if the resulting set of comparisons can be construed as a spo 
(by taking its transitive closure) the operation is successful, and the new comparisons are added;
if not, the addition operator does nothing.
Since the addition of new comparison follows the order $\le_\p$, this ensures
that better quality comparisons are considered before lower quality ones.

Note that this procedure guarantees that there are always \emph{some} comparisons in $\p\en\o$, 
i.e., we have that $\o\subseteq\p\en\o$, regardless of anything else.
Note, also, that the number of non-empty levels in $\dir_\p$ is finite and
the addition operation eventually reaches a fixed point, i.e., there exists $j\ge 0$ such that
$\add^i_{\le_\p}(\o)=\add^j_{\le_\p}(\o)$, for any $i\ge j$.
We denote by $\add_{\le_\p}^\ast(\o)$ the fixed point of this operator and take it as the defining expression of
a preference revision operator: if $\as$ is a preference assignment,
then the \emph{$\as$-induced preference revision operator $\en^\as$} 
is defined, for any $\p\in\CHN_V$
and $\o\in\SPO_V$, as:
$$
	\p\en^\as\o=\add^\ast_{\le_\p}(\o).
$$
Note that, by design, $\add^\ast_{\le_\p}(\o)\in\SPO_V$, i.e., the operator $\en$ is well defined.

\begin{figure}\centering
	\begin{tikzpicture}
	\node at (0,-1.5){$\p$};
	\node at (0,1.8)(1){$1$};
	\node at (0,0.9)(2){$2$};
	\node at (0,0)(3){$3$};
	\node at (0,-0.9)(4){$4$};
	\path[-latex](1)edge(2)(2)edge(3)(3)edge(4);
	\path[-latex,dotted](1)edge[bend right](3)(2)edge[bend right](4)(1)edge[bend right=40](4);
	
	\node at (1,-1.5){$\o$};
	\node at (1,1.8)(1){$1$};
	\node at (1,0.9)(2){$2$};
	\node at (1,0)(3){$3$};
	\node at (1,-0.9)(4){$4$};
	\path[-latex](3)edge[bend right](1);
	
	\node at (3,-1.5){$(\p\cup\o)^+$};
	\node at (3,1.8)(1){$1$};
	\node at (3,0.9)(2){$2$};
	\node at (3,0)(3){$3$};
	\node at (3,-0.9)(4){$4$};
	\path[-latex](1)edge(2)(2)edge(3)(3)edge(4)(3)edge[bend right](1);
	\path[-latex,dotted]
	(1)edge[bend right=60](3)(2)edge[bend right=50](4)
	(1)edge[bend right=70](4)(2)edge[bend left=30](1)
	(3)edge[bend left=30](2);
	
	\node at (5, -1.5){$\le_\p$};		
	\node at (5,-0.9){$(1,2)$};
	\node at (5,0){$(2,3)$, \sout{$(3,4)$}};
	\node at (6.5,-0.9){\tiny $\lvl^1_{\le_\p}(\p)$};
	\node at (6.5,0){\tiny $\lvl^2_{\le_\p}(\p)$};
	\end{tikzpicture}	
	\caption{
		Preference revision by adding direct comparisons from $\p$ to $\o$, using the preorder $\le_\p$.
		In $\le_\p$ lower means better; the comparison $(3,4)$ is ignored by the addition operator because it
		is not involved in a cycle with $\o$ (and is added at the beginning anyway).
	}
	\label{fig:preferences-4-items}
\end{figure}
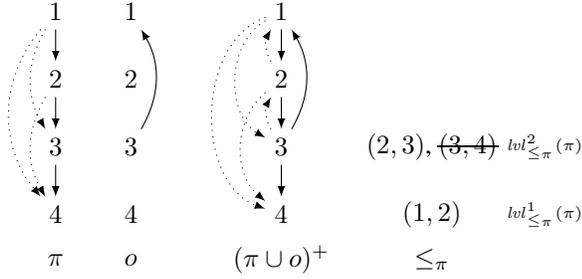

\begin{example}\label{ex:preferences-4-items}
	Consider initial preference order
	$\p=(1,2,3,4)$ and new information $\o=(3,1)$.
	We obtain that the direct comparisons of $\p$ are $\dir_\p=\{(1,2),(2,3),(3,4)\}$.
	Suppose, now, that there is a total preorder $\le_\p$ on $\dir_\p$ according to which
	$(1,2)<_\p (2,3)\approx_\p (3,4)$, as depicted in Figure \ref{fig:preferences-4-items}.
	To construct $\p\en\o$, the addition operator starts from
	$\add^0_{\le_\p}(\o)=(\{(3,1)\}\cup\{(1,4),(2,4),(3,4)\})^+$,
	i.e., $\o$ itself together with $\cycfree{\p}{\o}$, 
	the cycle-free part of $\p$ with respect to $\o$. 
	At the next step the addition operator tries to add $(1,2)$,
	which it can do successfully; at the next step it attempts to add $(2,3)$, which 
    creates a conflict with $(3,1)$ and $(1,2)$, added previously.
    After this there are no more comparisons to add.	
\end{example}

\section{Postulates for Preference Revision}\label{sec:postulates}
We show now that the procedure described in Section \ref{sec:semantics}
can be characterized with a set of AGM-like postulates that 
do not reference any concrete revision procedure and are, by themselves,
intuitive enough to provide reasonable constraints on any preference revision operator.

The first two postulates we consider apply to any chain $\p\in\CHN_V$, spo $\o\in\SPO_V$ 
and preference revision operator $\en\colon\CHN_V\times\SPO_V\rightarrow\SPO_V$, 
and are as follows:

\begin{description}
	\item[($\ppp{1}$)] $\pi\en\o\in\upper{\o}_\p$.
	\item[($\ppp{2}$)] $\cycfree{\p}{\o}\subseteq \p\en\o$.
\end{description}

\noindent
Postulates $\ppp{1-2}$ require the result to be formed by adding elements from $\p$ to 
the new information $\o$, and to be of a certain admissible type, i.e., an spo.
They are meant to capture preference revision in its most uncontroversial aspects,
yet they still require some careful unpacking.

Postulate $\ppp{1}$ states that $\p\en\o$ is a $\p$-completion of $\o$,
i.e., a preference order constructed only by adding direct comparisons from $\p$ to $\o$.
Unfolding its consequences, postulate $\ppp{1}$ ensures that:
\begin{description}
	\item[($i$)] $\p\en\o\in\SPO_V$, i.e., $\p\en\o$ is a chain on $V$, 
	\item[($ii$)] $\o\subseteq\p\en\o$, i.e., $\p\en\o$ contains all the information present in $\o$, and
	\item[($iii$)] $\p\en\o\subseteq(\p\cup\o)^+$, i.e., $\p\en\o$ is contained in the binary relation obtained 
	by simply adding $\o$ to $\p$, and adding all the comparisons inferred by transitivity. 
\end{description}

\noindent 
In terms of AGM propositional belief revision, 
postulate $\ppp{1}$ does the same duty as the \emph{Closure}, \emph{Success}, \emph{Inclusion} and \emph{Consistency}
postulates \cite{Hansson17,FermeH18}.
These postulate mandate that the revision result should be a propositional theory (i.e., have a required format), 
that the new information should be accepted, 
and that, unless the new information is inconsistent, the revision result should be consistent.

Given this observation, a question emerges as to why not use
conditions ($i$)-($iii$) as postulates instead of the proposed $\ppp{1}$.
The reason is that $\ppp{1}$ contains an element that lacks from conditions ($i$)-($iii$): 
what $\ppp{1}$ adds is the requirement that $\p\en\o$ is to be constructed using only direct comparisons 
of $\p$ (in addition to $\o$), and the reason why such a condition is desirable is to prevent $\p\en\o$ 
from having opinions on items over which no opinion had been expressed before revision. 
The issue is illustrated in Example \ref{ex:P1}.

\begin{example}\label{ex:P1}
	Consider preferences $\p$ and $\o$ as in 
	Example \ref{ex:preferences-3-items}, 
	and an additional spo $\p_4 = \{(3,1),(3,2)\}$.
	Note that $\p_4$ is such that $\o\subseteq\p_4\subseteq(\p\cup\o)^+$
	and therefore satisfies conditions ($i$)-($iii$) expressed above, 
	so that according to conditions ($i$)-($iii$) preference $\p_4$ is a viable 
	revision result.

	At the same time, we do not want to consider $\p_4$ as a potential candidate 
	for the revision result:
	the comparison $(3,2)$ occurs neither 
	in $\p$ nor in $\o$ as a direct comparison, 
	and there is reason to think that adding it would be unjustified:
	a rational preference revision operator should not be allowed to return $\p_4$
	when revising $\p$ by $\o$. 
	By contrast, when the comparison $(3,2)$ does occur, 
	e.g., in the desirable preference order $\p_1=(3,1,2)$,
	it occurs as the result of inference from $(3,1)$, 
	which is added from $\o$,
	and $(1,2)$, which is preserved from $\p$.
\end{example} 

\noindent
Postulate $\ppp{2}$ says that the cycle-free part of $\p$ with respect to 
$\o$ is to be preserved 
in $\p\en\o$, and is meant to preserve the parts of 
$(\p\cup\o)^+$ that are not up for dispute.
Note that in the case when $(\p\cup\o)^+$ 
does not contain a cycle then $\cycfree{\p}{\o}=(\p\cup\o)^+$,
and $\ppp{2}$ together with $\ppp{1}$ imply that $\p\en\o=(\p\cup\o)^+$: this is the case
when revision is easy, and nothing special needs to be done. 
Throughout all this, postulate $\ppp{2}$
serves the same function as the \emph{Vacuity} postulate in 
propositional revision~\cite{Hansson17,FermeH18}:
in the ideal case, when $\o$ can simply be added to $\p$, 
applying postulate $\ppp{2}$ results in 
the union of the two structures.

So far we have established that, if there is no conflict between $\p$ and $\o$,
i.e., no cycle arises by adding $\o$ to $\p$,
then we can simply add $\o$ to $\p$; and if there is a conflict, then $\en$ must choose
between the direct comparisons of $\p$ involved in the cycle.
This choice, however, must be coherent in a precise sense: we expect the choices 
to be indicative of an underlying preference over direct comparisons, 
which remains stable across different instances of revision.
This sense of coherence is illustrated by Example \ref{ex:choice}.

\begin{example}\label{ex:choice}
	Consider revising $\p=(1,2,3,4)$, 
    depicted in Figure \ref{fig:preferences-4-items}, by $\o_1=(4,1)$.
	SInce adding $(\p\cup\o)^+$ contains a cycle, revision requires a choice 
    between comparisons $(1,2)$, $(2,3)$ and $(3,4)$:
	assume $(1,2)$ is chosen, suggesting $(1,2)$ is better than $(2,3)$ and $(3,4)$. 
	Suppose, now, that we add $\o_2 = \{(3,4)\}$
    and revise by $(\o_1\cup\o_2)^+=\{(3,4),(4,1),(3,1)\}$: 
    another cycle is formed, and a choice is necessary, this time only between 
    $(1,2)$ and $(2,3)$. In accordance with the previous decision, $(1,2)$ should be chosen 
	here as well.
\end{example}

\noindent 
The choice behavior of a revision operator has to reflect an implicit preference 
order over the direct comparisons of $\p$,
and this is handled by the following postulates, 
meant to apply to any chain $\p\in\CHN_V$, spos $\o_1,\o_2\in\SPO_V$
such that $(\o_1\cup\o_2)^+\in\SPO_V$,
and a preference revision operator $\en$:

\begin{description}
	\item[($\ppp{3}$)] $\p\en(\o_1\cup\o_2)^+\subseteq((\p\en\o_1)\cup\o_2)^+$.
	\item[($\ppp{4}$)] If $((\p\en\o_1)\cup\o_2)^+\in\SPO_V$, then $((\p\en\o_1)\cup\o_2)^+\subseteq \p\en(\o_1\cup\o_2)^+$.
\end{description}

\noindent
There is a similarity between postulates $\ppp{3}$ and $\ppp{4}$ and 
the \emph{Superexpansion} and \emph{Subexpansion} postulates, respectively, 
from propositional belief revision~\cite{Hansson17,FermeH18},
which ensure that the choice between two options is stable and independent of alternatives 
not directly involved.
Postulates $\ppp{3-4}$ are meant to ensure the same here.
However, it turns out that in the context
of preference revision 
this happens only under a specific set of conditions, 
which we elaborate on in the following section.

\section{Coordination}\label{sec:coordination}
In this section we identify the precise conditions under which it makes sense to 
apply postulates $\ppp{3-4}$, presented in Section \ref{sec:postulates}.
Before doing so, we introduce some additional notation.

If $\o_1$ and $\o_2$ are spos,
we say that $\o_1$ and $\o_2$ are \emph{coordinated with respect to $\p$} 
if for any set $\d\subseteq\cyc{\p}{\o_1}$
such that for every direct comparison $(i,\tightplus{i}{1})\in\delta$, 
neither $(i,\tightplus{i}{1})$ nor $(\tightplus{i}{1},i)$ is in $(\o_1\cup\o_2)^+$, 
it holds that if $(\o_1\cup\d)^+\in\SPO_V$,
then $((\o_1\cup\o_2)^+\cup\d)^+\in\SPO_V$.
In other words, if $\p$ and $\o_1$ form a cycle
and we want to add $\o_2$ as well,
then we direct our attention to the direct comparisons in $\p$
that are not directly ruled out by $(\o_1\cup\o_2)^+$,
i.e., such that neither these comparisons nor their inverses are 
contained in $(\o_1\cup\o_2)^+$.
The property of coordination says that 
if we can consistently add some of these 
comparisons to $\o_1$,
then it must be the case that we can also add them to $(\o_1\cup\o_2)^+$. 
Intuitively, coordination means that adding
extra information $\o_2$ does not step on $\o_1$'s toes,
by rendering unviable any
comparisons that were previously viable.
The following example makes this clearer.

\begin{figure}\centering
	\begin{tikzpicture}
	\node at (0,-1.5){$\p$};
	\node at (0,1.8)(1){$1$};
	\node at (0,0.9)(2){$2$};
	\node at (0,0)(3){$3$};
	\node at (0,-0.9)(4){$4$};
	\path[-latex](1)edge(2)(2)edge(3)(3)edge(4);
	\path[-latex,dotted](1)edge[bend right](3)(2)edge[bend right](4)(1)edge[bend right=40](4);
	
	\node at (1.3,-1.5){$\le_\p$};		
	\node at (1.3,-0.9){$(1,2)$};
	\node at (1.3,0){$(2,3)$};
	\node at (1.3,0.9){$(3,4)$};
	
	\node at (2.9,-1.5){$\o_1$};
	\node at (2.9,1.8)(1){$1$};
	\node at (2.9,0.9)(2){$2$};
	\node at (2.9,0)(3){$3$};
	\node at (2.9,-0.9)(4){$4$};
	\path[-latex](4)edge[bend right](1);
	
	\node at (3.7,-1.5){$\o_2$};
	\node at (3.7,1.8)(1){$1$};
	\node at (3.7,0.9)(2){$2$};
	\node at (3.7,0)(3){$3$};
	\node at (3.7,-0.9)(4){$4$};
	\path[-latex](3)edge[bend right](1);
	
	\node at (5,-1.5){\tiny $\p\en(\o_1\cup\o_2)^+$};
	\node at (5,1.8)(1){$1$};
	\node at (5,0.9)(2){$2$};
	\node at (5,0)(3){$3$};
	\node at (5,-0.9)(4){$4$};
	\path[-latex](4)edge[bend right](1);
	\path[-latex](3)edge[bend right](1);
	\path[-latex](1)edge(2)(3)edge[dotted,bend left](2);
	\path[-latex](3)edge(4);
	
	\node at (6.9,-1.5){\tiny $((\p\en\o_1)\cup\o_2)^+$};
	\node at (6.9,1.8)(1){$1$};
	\node at (6.9,0.9)(2){$2$};
	\node at (6.9,0)(3){$3$};
	\node at (6.9,-0.9)(4){$4$};
	\path[-latex](4)edge[bend right](1)(1)edge(2)(2)edge(3)(1)edge[dotted, bend right](3)(4)edge[dotted](3);
	\path[-latex](3)edge[bend right](1);
	\end{tikzpicture}	
	\caption{
		Postulates $\ppp{3-4}$ are satisfied only if $\o_1$ and $\o_2$ are coordinated with respect to $\p$.
		}
	\label{fig:non-coordination}
\end{figure}
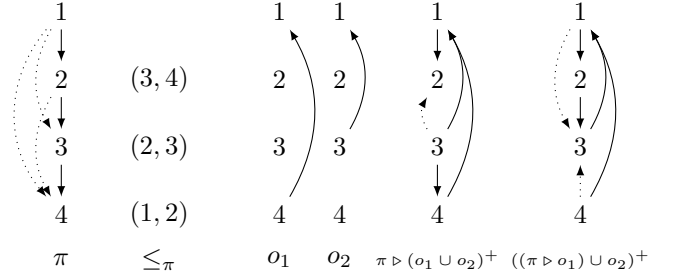

\begin{example}\label{ex:coordination}
	Take $\p=(1,2,3,4)$ and $\o_1=(4,1)$, $\o_2=(3,1)$.
	The direct comparisons of $\p$ that are involved in a cycle
	with $\o_1$ are $\cyc{\p}{\o_1}=\{(1,2),(2,3),(3,4)\}$,
	so that revision by $\o_1$ requires making a choice between 
	these comparisons.
	This choice, we expect, is done on the basis of some implicit preference over the comparisons,
	which guides revision even when we add additional information in the form of $\o_2$.
	Notice, now, that neither of $(1,2)$, $(2,3)$ and $(3,4)$ is individually ruled out by 
	$(\o_1\cup\o_2)^+$: we have, for instance, that $(1,2)\notin(\o_1\cup\o_2)^+$
	and $(2,1)\notin(\o_1\cup\o_2)^+$; the same holds for $(2,3)$ and $(3,4)$.
	The significance of this is that adding $\o_2$ to $\o_1$ does not alter the menu:
	the choice is still one over comparisons
	$(1,2)$, $(2,3)$ and $(3,4)$.

	The problem, however, is that whereas with $\o_1$ the choice is relatively unconstrained,
	meaning we can choose any proper subset of $\{(1,2), (2,3), (3,4)\}$ to add to $(4,1)$, 
	adding the additional comparison $(3,1)$ complicates things.
	To see how, consider the set of comparisons
	$\d=\{(1,2),(2,3)\}$. These comparisons can be consistently added to $\o_1$,
	i.e., $(\o_1\cup\d)^+\in\SPO_V$, but
	not to $(\o_1\cup\o_2)^+$,
	i.e.,
	$((\o_1\cup\o_2)^+\cup\d)^+\notin\SPO_V$.
	According to our definition, this implies that $\o_1$ and $\o_2$ are not coordinated with respect to $\p$.
	Thus, whereas with $\o_1$ can be augmented with both $(1,2)$ and $(2,3)$,
	$\o_1$ and $\o_2$ do not allow adding both comparisons \emph{together}.
	This, then, has a knock-down effect in that it makes it possible to add comparison $(3,4)$, 
	irrespective of where it is in the preorder on comparisons.
	
	In such a situation, then, the specific details of how the choice problem is constructed 
	makes the position of $(3,4)$ in the overall preference order over comparisons irrelevant. 
	Consequently, expecting our axioms to take the preference order into account will land us into trouble.
	To see this, onsider preorder $\le_\p$ in Figure \ref{fig:non-coordination}, 
	where $(3,4)$ is the least preferred comparison,
	and the revision operator $\en$ induced by it.
	We have that $(3,4)\in\p\en(\o_1\cup\o_2)^+$, but $(3,4)\notin((\p\en\o_1)\cup\o_2)^+$,
	i.e., postulate $\ppp{3}$ is not satisfied.

	This fact is related with the lack of coordination between $\o_1$ and $\o_2$, 
	as the addition of $\o_2$ tampers with the choice problem:
	though we can still add either one of the three comparisons, as mentioned above,
	we cannot add $(1,2)$ and $(2,3)$
	together anymore, which in turn means that $(3,4)$ can be added regardless of its position in $\le_\p$ the preorder.
\end{example}

\noindent
Example \ref{ex:coordination} is a case in which lack of coordination 
creates a situation where postulate $\ppp{3}$ is not satisfied.
We do not mean to imply, however, that there is anything wrong with postulate $\ppp{3}$, 
or with uncoordinated preference information.
Rather, we take the moral to be that we need postulates tailored to cases that do \emph{not} look like the one 
in Example \ref{ex:coordination}, 
in which preference information over the direct comparisons 
is rendered unusable by the overriding structural constraints of working with preference orders.

In other words, we want the behavior of a revision operator to reflect the preference information 
over the direct comparisons:
however, the requirement of transitivity means that, in the interest of consistency, 
we sometimes have to add comparisons that were not explicitly chosen, 
and this can interfere with the preference information over the comparisons of $\p$.
Thus, the significance of coordination, as the following theorem shows,
is that it is needed in order for postulates $\ppp{3-4}$ to be effective at 
ensuring that choice across different types of incoming preferences
is coherent.

\begin{theorem}\label{thm:P34-coordination}
	If $\as\colon\CHN_V\rightarrow\TPRE_{V\times V}$ 
	is a preference assignment and $\en^\as$ 
	is the $\as$-induced revision operator,
	then, 
	$\en^\as$ satisfies postulates $\ppp{3-4}$
	if and only if, 
	for any chain $\p\in\CHN_V$ and spos $\o_1,\o_2\in\SPO_V$, 
	it holds that $\o_1$ and $\o_2$ are coordinated with respect to $\p$.
\end{theorem}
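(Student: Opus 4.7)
The plan is to unfold the level-by-level behavior of $\add^\ast_{\le_\p}$ on the two sides of postulates $\ppp{3}$ and $\ppp{4}$, with coordination serving as the menu-stability condition that makes the relevant comparisons line up. For brevity, let $A_i = \add^i_{\le_\p}(\o_1)$ and $B_i = \add^i_{\le_\p}((\o_1\cup\o_2)^+)$; the postulates reduce to containments between the fixed points of these two addition sequences, modulo the extra presence of $\o_2$ on the $A$-side.

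For the $(\Leftarrow)$ direction I assume universal coordination and prove $\ppp{3}$ and $\ppp{4}$ by a joint induction on $i$. The base case compares $A_0$ and $B_0$: the cycle-free parts differ only by direct comparisons of $\p$ that $\o_2$ has turned cyclic, or that $\o_2$ contributes transitively, so a direct set-theoretic argument combined with $\o_1 \subseteq (\o_1\cup\o_2)^+$ gives the required containments at level $0$. The inductive step invokes coordination to transport any batch of level-$i$ comparisons from $\cyc{\p}{\o_1}$ that combines consistently with $A_{i-1}$ into an admissible batch at stage $i$ for $B_{i-1}$, and the reverse transport follows from $\o_1\subseteq(\o_1\cup\o_2)^+$. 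Passing to the fixed point of the addition operator and taking transitive closures yields $\p\en^\as(\o_1\cup\o_2)^+ \subseteq ((\p\en^\as\o_1)\cup\o_2)^+$ and, whenever the right-hand side of $\ppp{4}$ is in $\SPO_V$, the reverse inclusion.

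For the $(\Rightarrow)$ direction I argue the contrapositive. Given a non-coordinated triple $(\p,\o_1,\o_2)$ with witness $\d\subseteq\cyc{\p}{\o_1}$, I locate the stage in $\le_\p$ at which the addition operator processes the comparisons of $\d$. Since $(\o_1\cup\d)^+\in\SPO_V$ but $((\o_1\cup\o_2)^+\cup\d)^+\notin\SPO_V$, at least one member of $\d$ must be rejected when revising by $(\o_1\cup\o_2)^+$, freeing a slot that lets some strictly lower-ranked $c^\ast\in\cyc{\p}{\o_1}$ enter $\p\en^\as(\o_1\cup\o_2)^+$; meanwhile all of $\d$ survives into $\p\en^\as\o_1$, so adjoining $\o_2$ recreates the forbidden cycle around $c^\ast$ and places $c^\ast$ outside $((\p\en^\as\o_1)\cup\o_2)^+$, witnessing the failure of $\ppp{3}$ on the very triple at hand. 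The main obstacle I foresee lies here: one must verify that such a $c^\ast$ can always be identified regardless of how $\le_\p$ ranks the members of $\d$ against competing direct comparisons, and regardless of ties at the relevant levels. A case split on these relative ranks, together with the ``individually unblocked'' clause in the definition of coordination (which ensures that neither the members of $\d$ nor their inverses are already forced by $(\o_1\cup\o_2)^+$), should discharge the technicalities.
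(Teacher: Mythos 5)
Your ``$\Leftarrow$'' sketch is in the same spirit as the paper's argument (the paper argues per offending comparison $c^\star$, by contradiction with coordination, rather than by an explicit level induction, but the role coordination plays is the one you assign it), and with detail it would go through. The genuine gap is in the ``$\Rightarrow$'' direction. You try to show that the \emph{given} operator $\en^\as$, with its fixed preorder $\le_\p$, violates $\ppp{3}$ on the non-coordinated triple itself. The two claims this rests on --- that all of $\d$ survives into $\p\en^\as\o_1$, and that the rejection of part of $\d$ on the other side ``frees a slot'' letting some worse-ranked $c^\star$ enter $\p\en^\as(\o_1\cup\o_2)^+$ --- fail for an arbitrary assignment. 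Concretely, take the paper's own non-coordinated instance $\p=(1,2,3,4)$, $\o_1=(4,1)$, $\o_2=(3,1)$ with witness $\d=\{(1,2),(2,3)\}$, and the trivial assignment tying all of $\dir_\p$ on one level: the addition operator then tries to add the whole batch $\{(1,2),(2,3),(3,4)\}$ at once, fails on both sides, and returns $\p\en^\as\o_1=\{(4,1)\}$ and $\p\en^\as(\o_1\cup\o_2)^+=\{(4,1),(3,1)\}=((\p\en^\as\o_1)\cup\o_2)^+$, so $\ppp{3}$ and $\ppp{4}$ hold on this triple and no violation is exhibited. Ties, interleaving of $\d$ with better-ranked comparisons outside $\d$, and the batch-rejection behaviour of $\add^i_{\le_\p}$ are exactly the obstacle you flag at the end, and no case split on the given $\le_\p$ can discharge it, because the statement you are trying to prove about that particular $\le_\p$ is simply not true in general.

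The missing idea is that the ``only if'' direction is established by \emph{constructing} a suitable assignment rather than by analysing the one you were handed: from the non-coordination witness $\d$ the paper picks a comparison $c^\star\in\cyc{\p}{\o_1}\setminus\d$ that completes the cycle and defines $\le_\p$ so that the elements of $\d$ sit strictly at the best levels (in a linear order) and $c^\star$ is worst. For this tailored preorder, all of $\d$ is added when revising by $\o_1$ alone (so $c^\star$ is blocked and $c^\star\notin(\add^\ast_{\le_\p}(\o_1)\cup\o_2)^+$), while not all of $\d$ can be added to $(\o_1\cup\o_2)^+$ (so $c^\star\in\add^\ast_{\le_\p}(\o_1\cup\o_2)^+$), violating $\ppp{3}$; a member of $\d$ then witnesses the failure of $\ppp{4}$. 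Reframed this way --- the induced operators satisfy $\ppp{3}$--$\ppp{4}$ for every assignment exactly on coordinated pairs --- your contrapositive can be completed; as written, restricted to the given $\as$, it cannot.
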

\begin{proof}
	(``$\Leftarrow$'')
	Take $\o_1,\o_2\in\SPO_V$ that are coordinated with respect to $\p$.
	We will show that, for any preorder $\le_\p$ on $\dir_\p$,
	the $\as$-induced revision operator $\en^\as$ satisfies postulates $\ppp{3-4}$.
	Since 
	$\en^\as$ satisfies postulates $\ppp{3-4}$ trivially
	if $(\p\cup\o_1)^+\in\SPO_V$,
	we look at the case when $\cyc{\p}{\o_1}\neq\emptyset$,
	i.e., when $(\p\cup\o_1)^+$ contains a cycle.
	
	For postulate $\ppp{3}$, assume there is a 
	comparison $c^\star\in\add^\ast_{\le_\p}(\o_1\cup\o_2)^+$
	such that $c^\star\notin(\add^\ast_{\le_\p}(\o_1)\cup\o_2)^+$.
	If $c^\star\in(\o_1\cup\o_2)^+$ then a contradiction follows immediately.
	We thus have to look 
	at the case when $c^\star\notin(\o_1\cup\o_2)^+$, which contains two subcases 
	of its own.

	\emph{Case 1}. 
	If $c^\star\in\dir_\p$, then by our assumption 
	we have that $c^\star\in\cyc{\p}{\o_1}$, i.e., 
	$c^\star$ is involved in some cycle with $\o_1$.
	From $c^\star\notin\add^\ast_{\le_\p}(\o_1)$ 
	we infer that there must be a set $\d\subseteq\dir_\p$
	of direct comparisons of $\p$ 
	that precede $c^\star$ in $\le_\p$, are added to $\o_1$ before it, 
	and prevent $c^\star$ itself from being added.
	In particular, this means that $(\o_1\cup\d)^+\in\SPO_V$, 
	but $((\o_1\cup\d)^+\cup\{c^\star\})^+\notin\SPO_V$. 
	At the same time, we know that $c^\star\in\add^\ast_{\le_\p}(\o_1\cup\o_2)^+$,
	i.e., $c^\star$ can be consistently added to $(\o_1\cup\o_2)^+$.
	Note that this happens after all the comparisons in $\d$, which precede it in $\le_\p$,
	have been considered as well.
	This implies that not all of the comparisons in $\d$ can be added to $(\o_1\cup\o_2)^+$,
	since if they could, then the cycle formed with $\o_1$, $\d$ and $c^\star$ would be reproduced here
	as well. 
	If not all of the comparisons in $\d$ can be added to $(\o_1\cup\o_2)^+$,
	this must be because $((\o_1\cup\o_2)^+\cup\d)^+$ contains a cycle,
	i.e., $((\o_1\cup\o_2)^+\cup\d)^+\notin\SPO_V$.
	This now contradicts the fact that $\o_1$ and $\o_2$ are coordinated with respect to $\p$.
	
	\emph{Case 2}. 
	If $c^\star$ is not a direct comparison of $\p$, then it is inferred 
	by transitivity using at least one direct comparison of $\p$ added previously. 
	We apply the reasoning in Case 1 to these direct comparisons to show that they are in 
	$(\add^\ast_{\le_\p}(\o_1)\cup\o_2)^+$, which implies 
		 that 
		$c^\star\in(\add^\ast_{\le_\p}(\o_1)\cup\o_2)^+$ as well.	
	
	For postulate $\ppp{4}$, take $c^\star\in(\add^\ast_{\le_\p}(\o_1)\cup\o_2)^+$
	and assume $c^\star\notin\add^\ast_{\le_\p}(\o_1\cup\o_2)^+$.
	As before, the non-obvious case is when $c^\star\notin(\o_1\cup\o_2)^+$.
	If $c^\star\in\dir_\p$, 
	then from the assumption that $c^\star\notin\add^\ast_{\le_\p}(\o_1\cup\o_2)^+$
	we conclude that there is a set $\d\subseteq\cyc{\p}{\o_1}$ of comparisons 
	that precede $c^\star$ in $\le_\p$,
	are added to $(\o_1\cup\o_2)^+$ before it and, in concert with $(\o_1\cup\o_2)^+$, 
	block $c^\star$ from being added, i.e., 
	such that:
	$$((\o_1\cup\o_2)^+\cup\d)^+\in\SPO_V,$$ 
	but	$((\o_1\cup\o_2)^+\cup\d')^+\notin\SPO_V$,
	where $\d'=\d\cup\{c_\star\}$.
	From the second to last result 
	we infer that $\d$ can be added consistently to $(\o_1\cup\o_2)^+$
	and, since we have that $c^\star\in(\add^\ast_{\le_\p}(\o_1)\cup\o_2)^+$ as well,
	we obtain that 
	and $c^\star$ can be added consistently to $\o_1$.
	In other words, it holds that 
	$(\o_1\cup\d')^+\in\SPO_V$.
	Together with the previous result this contradicts the fact that $o_1$ and $\o_2$ are coordinated 
	with respect to $\p$.
	
	The case when $c^\star\notin(\o_1\cup\o_2)^+$ is 
	treated analogously as for postulate $\ppp{3}$.

	(``$\Rightarrow$'')
	Assume that there are $\o_1,\o_2\in\SPO_V$ not coordinated with respect to $\p$,
	i.e., there exists a set $\delta\subseteq\cyc{\p}{\o_1}$ of direct comparisons of $\p$
	that are involved in a cycle with $\o_1$ and are such that
	$(\o_1\cup\d)^+\in\SPO_V$ and $((\o_1\cup\o_2)^+\cup\d)^+\notin\SPO_V$.
	Additionally, we have that neither of the comparisons in $\delta$, 
	or their inverses, are in $(\o_1\cup\o_2)^+$.
	We infer that there must exist a comparison $c^\star\in(\cyc{\p}{\o_1}\setminus\d)$
	that completes the cycle. 
	We will show that there exists a preorder $\le_\p$ 
	such that the revision operator	induced by it does not satisfy $\ppp{3}$.
	Take a preorder $\le_\p$ on $\dir_\p$ that arranges the elements 
	of $\d$ in a linear order at the bottom of $\le_\p$, 
	i.e., such that $c_j<_\p c_l$, for any $c_j\in\d$ and $c_l\notin\d$,
	and $c^\star$ the maximal element in $\le_\p$,
	i.e., $c_j<_\p c^\star$, for any $c_j\in\d$.
	This implies, in particular, that $c_j<_\p c^\star$, for any $c_j\in\d$.
	
	Note, now, that $c^\star\in\add_{\le_\p}^\ast(\o_1\cup\o_2)^+$: 
	this is because, by assumption, not all of the comparisons in 
	$\d$ can be added to $(\o_1\cup\o_2)^+$,
	and this makes it possible for $c^\star$ to be added.
	On the other hand, $c^\star\notin(\add^\ast_{\le_\p}(\o_1)\cup\o_2)^+$:
	this is because here we can, again by assumption, consistently add $\d$ to $\o_1$ and,
	since $c^\star$ is the last in line to be added, the inevitability
	of creating a cycle with $\d$ and the rest of the comparisons of $\o_1$
	makes it impossible to do so consistently. 
	We obtain that $c^\star\in\add_{\le_\p}^\ast(\o_1\cup\o_2)^+$ but 
	$c^\star\notin(\add^\ast_{\le_\p}(\o_1)\cup\o_2)^+$, i.e., postulate $\ppp{3}$ is not satisfied.
	Concurrently, there will be a comparison in $\d$ that occurs in $(\add^\ast_{\le_\p}(\o_1)\cup\o_2)^+$
	that does not make it into $\add^\ast_{\le_\p}(\o_1\cup\o_2)^+$, 
	showing that $\ppp{4}$ is not satisfied either.
\end{proof}

\noindent
Theorem \ref{thm:P34-coordination} shows that coordination is needed in order to make
sure that postulates $\ppp{3-4}$ work, and we will henceforth assume that $\o_1$ and $\o_2$ 
are coordinated with respect to $\p$ whenever we apply these postulates.

\section{Characterizing Preference Revision as Choice Over Comparisons}\label{sec:representation}
We show now that the procedure described in 
Section \ref{sec:semantics} is characterized
by the postulates introduced 
in Section \ref{sec:postulates}, under the restrictions
established through Theorem \ref{thm:P34-coordination}.
Theorem \ref{thm:repr-ties-lr} shows that the procedure in 
Section \ref{sec:semantics} 
yields preference revision operators that satisfy postulates $\ppp{1-4}$.

\begin{theorem}\label{thm:repr-ties-lr}
	If $\as\colon\CHN_V\rightarrow\TPRE_{V\times V}$ is a preference assignment,
	then the revision operator $\en^\as$ induced by it satisfies postulates $\ppp{1-4}$,
	for any $\p\in\CHN_V$ and $\o,\o_1,\o_2\in\SPO_V$ such that $\o_1$, $\o_2$ are coordinated 
	with respect to $\p$.
\end{theorem}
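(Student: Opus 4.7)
The plan is to verify postulates $\ppp{1}$--$\ppp{4}$ one by one for the explicit construction $\p\en^\as\o=\add^\ast_{\le_\p}(\o)$, relying on Theorem~\ref{thm:P34-coordination} for the last two.

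For $\ppp{1}$ I would show (a)~$\add^\ast_{\le_\p}(\o)\in\SPO_V$, (b)~$\o\subseteq\add^\ast_{\le_\p}(\o)\subseteq(\p\cup\o)^+$, and (c)~the result can be written as $(\o\cup\d)^+$ for some $\d\subseteq\dir_\p$. Claim (a) follows by induction on $i$: the base case is an spo because no pair $(a,b)\in\cycfree{\p}{\o}$ has its reverse in $(\p\cup\o)^+$, so the closure $(\o\cup\cycfree{\p}{\o})^+$ cannot contain a cycle, and the inductive step is handled by the explicit case-split in the definition of $\add^i_{\le_\p}(\o)$, which falls back to $\add^{i-1}_{\le_\p}(\o)$ whenever adding level-$i$ comparisons would break the spo property. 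Claim (b) is immediate from $\o\subseteq\add^0_{\le_\p}(\o)$ and from the observation that $\cycfree{\p}{\o}\cup\dir_\p\subseteq(\p\cup\o)^+$, so every successive closure stays within $(\p\cup\o)^+$. For claim (c), I would take $\d^\ast$ to be the union of all direct comparisons in $\cyc{\p}{\o}$ that were successfully admitted across the iterations, together with the comparisons contributed by $\cycfree{\p}{\o}$, and verify that $(\o\cup\d^\ast)^+$ reproduces $\add^\ast_{\le_\p}(\o)$ exactly.

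For $\ppp{2}$ the argument is almost automatic: by definition $\add^0_{\le_\p}(\o)=(\o\cup\cycfree{\p}{\o})^+\supseteq\cycfree{\p}{\o}$, and the addition operator is monotone in the sense that $\add^i_{\le_\p}(\o)\supseteq\add^{i-1}_{\le_\p}(\o)$ at every step, so passing to the fixed point preserves the inclusion and yields $\cycfree{\p}{\o}\subseteq\p\en^\as\o$. For $\ppp{3}$ and $\ppp{4}$ I would appeal directly to the ``$\Leftarrow$'' direction of Theorem~\ref{thm:P34-coordination}, which has already established that coordination of $\o_1$ and $\o_2$ with respect to $\p$ implies that the induced operator $\en^\as$ satisfies both postulates; since the hypothesis of Theorem~\ref{thm:repr-ties-lr} assumes exactly this coordination, nothing further is required.

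The main obstacle is the representation part of $\ppp{1}$, i.e., pinning down the exact $\d^\ast\subseteq\dir_\p$ whose closure with $\o$ reproduces the fixed point. The delicacy is that $\add^0_{\le_\p}(\o)$ may absorb comparisons from $\cycfree{\p}{\o}$ that are themselves transitively inferred, and one must track how these interact with direct comparisons added at later levels (or blocked by the spo check) so that the inductive bookkeeping yields a clean characterization of $\add^\ast_{\le_\p}(\o)$ as a $\p$-completion; the rest of the proof is then routine.
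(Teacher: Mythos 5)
Your overall route is the same as the paper's: verify $\ppp{1}$--$\ppp{2}$ directly from the construction of $\add^\ast_{\le_\p}(\o)$ and delegate $\ppp{3}$--$\ppp{4}$ to Theorem~\ref{thm:P34-coordination} under the coordination hypothesis; the paper's own proof is just a compressed version of this, and your base-case argument for spo-ness (any cycle in $(\o\cup\cycfree{\p}{\o})^+$ would force the reverse of some cycle-free comparison into $(\p\cup\o)^+$, since the closure stays inside $(\p\cup\o)^+$) is the right one, as is the monotonicity argument for $\ppp{2}$.

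The one step you flag and then defer --- writing $\add^\ast_{\le_\p}(\o)$ as $(\o\cup\d)^+$ with $\d\subseteq\dir_\p$ --- does need to be closed, but it is not a matter of delicate bookkeeping: by the formal definition, $\cycfree{\p}{\o}$ consists of comparisons of the shape $\drc{i}$, i.e.\ it is the set of \emph{direct} comparisons of $\p$ whose reverses do not occur in $(\p\cup\o)^+$, hence a subset of $\dir_\p$. Since nested transitive closures collapse ($((A)^+\cup B)^+=(A\cup B)^+$), you get $\add^\ast_{\le_\p}(\o)=(\o\cup\d^\ast)^+$ with $\d^\ast\subseteq\dir_\p$ equal to $\cycfree{\p}{\o}$ together with the level-wise comparisons from $\cyc{\p}{\o}$ that pass the spo check, and membership in $\upper{\o}_\p$ follows because that check guarantees the fixed point is in $\SPO_V$. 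Be aware that the broader reading you appear to entertain (cycle-free part $=$ \emph{all} non-cyclic comparisons of $(\p\cup\o)^+$, as in the listing accompanying Example~\ref{ex:preferences-4-items}) would make the statement false: there, $(1,4)$ is cycle-free but lies in no $\p$-completion of $\o$, since deriving it requires $(1,2),(2,3),(3,4)$, which cannot all coexist with $(3,1)$; under that reading $\ppp{1}$ and $\ppp{2}$ conflict and no operator could satisfy both. So commit to the narrow reading of $\cycfree{\p}{\o}$, state $\d^\ast$ explicitly, and your claim (c) --- and with it the whole proof --- goes through as routinely as you predict.
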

\begin{proof}
	Satisfaction of postulates $\ppp{1-2}$ is straightforward.
	For $\ppp{1}$, since at every step $\add^i_{\le_\p}$ 
	selects some direct comparisons in $\p$ to add to $\o$,
	the end result satisfies the condition for being in $\upper{\o}_\p$.
	For $\ppp{2}$, note that 
	$(\p\cup\o)^+\subseteq\add^0_{\le_\p}(\o)\subseteq\add_{\le_\p}^\ast(\o)$.
	Since $\o_1$ and $\o_2$ are assumed to be coordinated with respect to $\p$, 
	satisfaction of postulates $\ppp{3-4}$ 
	is guaranteed by Theorem \ref{thm:P34-coordination}.	
\end{proof}

\noindent
For the converse, we want to show that any preference revision operator satisfying $\ppp{1-4}$
can be rationalized using a preference assignment.

To that end, we will construct the preorder $\le_\p$ from binary comparisons,
but we must first figure out how to compare two direct comparisons $\drc{k}$ and $\drc{l}$.
This is done by creating a situation where we cannot add both and hence one has to be given up.
We will use a special type of preference order to induce a choice between these comparisons.
If $\p\in\CHN_V$ is a chain and $\drc{k},\drc{l}\in\dir_\p$ are direct comparisons of $\p$,
the \emph{choice inducing preference $\o_{k,l}$ for $\drc{k}$ and $\drc{l}$} is
defined as $\o_{k,l}=\{(\tightplus{k}{1},l),(l{+}1, k)\}$.
The following example illustrates this notion. 

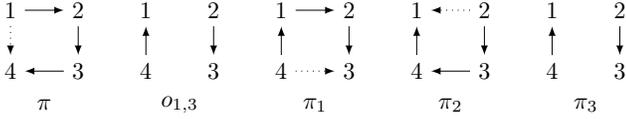
\begin{figure}\centering
	\begin{tikzpicture}
	\scalebox{0.9}{
		\node at (0.5,-0.5){$\p$};
		\node at (0,0.9)(1){$1$};
		\node at (1,0.9)(2){$2$};
		\node at (1,0)(3){$3$};
		\node at (0,0)(4){$4$};
		\path[-latex] (1)edge(2)(3)edge(4)(2)edge(3)(1)edge[dotted](4);
		
		\node at (2.5,-0.5){$\o_{1,3}$};
		\node at (2,0.9)(1){$1$};
		\node at (3,0.9)(2){$2$};
		\node at (3,0)(3){$3$};
		\node at (2,0)(4){$4$};
		\path[-latex] (2)edge(3)(4)edge(1);
		
		\node at (4.5,-0.5){$\p_1$};
		\node at (4,0.9)(1){$1$};
		\node at (5,0.9)(2){$2$};
		\node at (5,0)(3){$3$};
		\node at (4,0)(4){$4$};
		\path[-latex] (1)edge(2)(2)edge(3)(4)edge(1)(4)edge[dotted](3);
		
		\node at (6.5,-0.5){$\p_2$};
		\node at (6,0.9)(1){$1$};
		\node at (7,0.9)(2){$2$};
		\node at (7,0)(3){$3$};
		\node at (6,0)(4){$4$};
		\path[-latex] (4)edge(1)(2)edge(3)(3)edge(4)(2)edge[dotted](1);
		
		\node at (8.5,-0.5){$\p_3$};
		\node at (8,0.9)(1){$1$};
		\node at (9,0.9)(2){$2$};
		\node at (9,0)(3){$3$};
		\node at (8,0)(4){$4$};
		\path[-latex] (4)edge(1)(2)edge(3);
	}
	\end{tikzpicture}
	\caption{
		Revision of $\p$ by $o_{1,3}$ forces a choice between direct comparisons
		$(1,2)$ and $(3,4)$: since keeping both $(1,2)$ and $(3,4)$ is not possible, 
		at least one of them, potentially both, must be discarded.
		Depending on the choice made, possible results are $\p_1$, $\p_2$ and $\p_3$.
	}
	\label{fig:choice-inducing-preference}
\end{figure}

\begin{example}\label{ex:choice-inducing-preference}
	To induce a choice between direct comparisons 
	$(1,2)$ and $(3,4)$ in Figure \ref{fig:choice-inducing-preference}, revise by 
	$\o_{1,3}=\{(2,3),(4,1)\}$.
	Note that effectiveness of this maneuver hinges on the choice 
	being confined to the direct comparisons of $\p$:
	if inferred comparisons were allowed to be part of the choice, 
	$\o_{1,3}$ loses its power to discriminate between $(1,2)$ and $(3,4)$:
	if, for instance, $(1,3)$ and $(2,4)$ are chosen, then $(2,1)$ and $(4,3)$ 
	have to be inferred, leaving no space for a choice between $(1,2)$ and $(3,4)$,
	i.e., $\o_{1,3}$ would tell us nothing about the implicit preference between $(1,2)$ and $(3,4)$.
	We can also see that comparison of $(1,2)$ and $(2,3)$ is done by revising by $(3,1)$.	
\end{example}

\noindent 
Conversely, if $\drc{k},\drc{l}\in\dir_\p$
and $\en$ is a preference revision operator, then the 
\emph{revealed order $\le^\en_\p$ between $\drc{k}$ and $\drc{l}$} is defined as:
$$
	\drc{k}\le^\en_\p \drc{l}~\textnormal{if}~\drc{l}\notin\p\en\o_{k,l}.
$$
Intuitively, $\drc{l}$ being discarded from $\p\en\o_{k,l}$ signals 
that it is considered less important than $\drc{k}$. 

The primary question at this point is whether the revealed preference relation $\le^\en_\p$, 
as defined above, is transitive. We show next that the answer is yes.

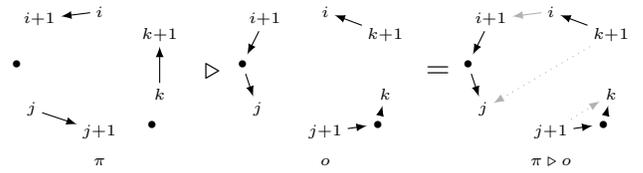
\begin{figure}\centering
	\begin{tikzpicture}\tiny
	\node at (0,-1.3){$\p$};
	\node at (0,0.7)(1){$i$};
	\node at (-0.8, 0.6)(2){$i{+}1$};
	\node at (-1.1,0)(3){$\bullet$};
	\node at (-0.9, -0.6)(4){$j$};
	\node at (0,-0.9)(5){$j{+}1$};
	\node at (0.7,-0.8)(6){$\bullet$};
	\node at (0.8,-0.4)(7){$k$};
	\node at (0.8,0.4)(8){$\tightplus{k}{1}$};
	\path[-latex] (1)edge(2)(4)edge(5)(7)edge(8);
	
	\node at (1.5,-0.1){\Large $\en$};
	
	\node at (3,-1.3){$\o$};
	\node at (3,0.7)(1){$i$};
	\node at (2.2, 0.6)(2){$i{+}1$};
	\node at (1.9,0)(3){$\bullet$};
	\node at (2.1, -0.6)(4){$j$};
	\node at (3,-0.9)(5){$j{+}1$};
	\node at (3.7,-0.8)(6){$\bullet$};
	\node at (3.8,-0.4)(7){$k$};
	\node at (3.8,0.4)(8){$\tightplus{k}{1}$};
	\path[-latex] (2)edge(3)(3)edge(4)(5)edge(6)(6)edge(7)(8)edge(1);
	
	\node at (4.5,-0.1){\Large $=$};
	
	\node at (6,-1.3){$\p\en\o$};
	\node at (6,0.7)(1){$i$};
	\node at (5.2, 0.6)(2){$i{+}1$};
	\node at (4.9,0)(3){$\bullet$};
	\node at (5.1, -0.6)(4){$j$};
	\node at (6,-0.9)(5){$j{+}1$};
	\node at (6.7,-0.8)(6){$\bullet$};
	\node at (6.8,-0.4)(7){$k$};
	\node at (6.8,0.4)(8){$\tightplus{k}{1}$};
	\path[-latex] (2)edge(3)(3)edge(4)(5)edge(6)(6)edge(7);
	\path[-latex] (1)edge[gray!60](2)(8)edge(1);
	\path[-latex] (8)edge[dotted,gray!60](4)(5)edge[dotted,gray!60](7);
	\end{tikzpicture}
	\caption{
		To show that $\le_\p^\en$ is transitive,
		we show first that $\drc{k}\notin\p\en\o$.
		Bullets indicate other potential items in $\p$;
		faded arrows indicate comparisons that may not be in $\p\en\o$,
		but can be consistently added to it.
	}
	\label{fig:acyclic-proof}
\end{figure}

\begin{lemma}\label{lem:revealed-preference-relation-transitive}
	If $\en$ satisfies postulates $\ppp{1-4}$, 
	then the revealed preference relation $\le_\p^\en$	
	is transitive.
\end{lemma}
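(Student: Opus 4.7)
The plan is to prove transitivity by contradiction: assume $(j,j+1) \notin \p \en \o_{i,j}$ and $(k,k+1) \notin \p \en \o_{j,k}$, and suppose toward contradiction that $(k,k+1) \in \p \en \o_{i,k}$. Figure~\ref{fig:acyclic-proof} suggests first showing $(k,k+1)$ is excluded from $\p \en \o$ for some $\o$ encoding a long cycle through $i, i{+}1, j, j{+}1, k, k{+}1$, and then transferring this exclusion back to $\p \en \o_{i,k}$. I will realize that intuition through a two-stage argument that uses $\ppp{3}$ to eliminate a direct comparison from an auxiliary revision, and $\ppp{4}$ in reverse to push the resulting information back into $\p \en \o_{i,k}$; coordination between the relevant preferences is assumed throughout, as permitted after Theorem~\ref{thm:P34-coordination}.

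The first stage establishes the helper fact $(j,j+1) \notin \p \en \o_{i,k}$. By $\ppp{3}$ with $\o_1 = \o_{i,j}$ and $\o_2 = \o_{i,k}$, together with $(j,j+1) \notin \p \en \o_{i,j}$ and the absence of outgoing edges from $j$ in $\o_{i,k}$, I obtain $(j,j+1) \notin \p \en (\o_{i,j} \cup \o_{i,k})^+$. Applying $\ppp{4}$ with swapped roles then yields $(j,j+1) \notin \p \en \o_{i,k}$, provided $((\p \en \o_{i,k}) \cup \o_{i,j})^+$ is an spo. The crucial observation enabling that check is that $(i,i+1) \notin \p \en \o_{i,k}$ holds automatically: if $(i,i+1)$ were in $\p \en \o_{i,k}$, then combined with $(i+1,k)$ and $(k+1,i)$ from $\o_{i,k} \subseteq \p \en \o_{i,k}$ it would close a cycle inside $\p \en \o_{i,k}$ itself. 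Hence nothing in $\p \en \o_{i,k}$ ends at $i+1$, which rules out the 2-cycles that $\o_{i,j}$ could introduce.

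The second stage mirrors the first with $\o_{j,k}$ in place of $\o_{i,j}$: $\ppp{3}$ yields $(k,k+1) \notin \p \en (\o_{j,k} \cup \o_{i,k})^+$, and $\ppp{4}$ with $\o_1 = \o_{i,k}, \o_2 = \o_{j,k}$ then forces the desired contradiction with $(k,k+1) \in \p \en \o_{i,k}$. The main obstacle is the spo premise for this final $\ppp{4}$: adding $(k+1,j)$ from $\o_{j,k}$ threatens to complete a 2-cycle if $(j,k+1)$ could be derived inside $\p \en \o_{i,k}$. The only possible derivation would go through the chain $(j,j+1), (j+1,j+2), \ldots, (k,k+1)$; but the first stage has already ruled out $(j,j+1) \in \p \en \o_{i,k}$, so the chain breaks at its first link, the spo premise holds, and the argument closes.
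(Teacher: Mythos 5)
Your route is genuinely different from the paper's: instead of one auxiliary preference built from \emph{all} direct comparisons of $\p$ between $i$ and $k$ (minus the three under discussion) plus $(\tightplus{k}{1},i)$, you work with pairwise unions of the choice-inducing preferences themselves, transferring exclusions back and forth via $\ppp{3}$ and $\ppp{4}$. In the generic case $\tightplus{i}{1}<j$ and $\tightplus{j}{1}<k$ this two-stage transfer essentially works (under the same blanket coordination assumption the paper relies on) and is arguably more economical than the paper's three-way case analysis. However, there is a genuine gap when the comparisons are adjacent, specifically when $j=\tightplus{i}{1}$. Then $\o_{i,k}$ contains the edge $(\tightplus{i}{1},k)=(j,k)$, an outgoing edge of $j$, which defeats both of your key structural claims. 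In the fully consecutive case $k=\tightplus{j}{1}$ this edge \emph{is} $\drc{j}$ itself, so $\drc{j}\in\o_{i,k}\subseteq\p\en\o_{i,k}$ and your stage-one conclusion $\drc{j}\notin\p\en\o_{i,k}$ is simply false. More generally, whenever $j=\tightplus{i}{1}$, the reductio hypothesis $\drc{k}\in\p\en\o_{i,k}$ together with $(j,k)\in\o_{i,k}$ already yields $(j,\tightplus{k}{1})\in\p\en\o_{i,k}$ \emph{without} any use of $\drc{j}$; hence $((\p\en\o_{i,k})\cup\o_{j,k})^+$ contains a cycle through $(\tightplus{k}{1},j)$, the spo premise of your final application of $\ppp{4}$ fails, and no contradiction is obtained. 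Your claim that the only derivation of $(j,\tightplus{k}{1})$ must pass through the chain $\drc{j},\dots,\drc{k}$ overlooks exactly this shortcut; the paper's construction avoids it because in its auxiliary order the only edge leaving $j$ is $\drc{j}$ itself.

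A smaller but real flaw: the assertion that $(i,\tightplus{i}{1})\notin\p\en\o_{i,k}$ ``holds automatically'' is wrongly justified --- the edges $(i,\tightplus{i}{1})$, $(\tightplus{i}{1},k)$, $(\tightplus{k}{1},i)$ do not close a cycle by themselves; one also needs $\drc{k}$, i.e., precisely your reductio hypothesis. Since you argue by contradiction, that hypothesis is available and the step can be saved by invoking it explicitly, but as written the reason given is incorrect, and it matters: without the reductio hypothesis the spo premise of the stage-one $\ppp{4}$ step can genuinely fail (e.g., when $\drc{i}$ and the intermediate direct comparisons up to $j$ all survive in $\p\en\o_{i,k}$). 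Likewise, excluding only ``2-cycles'' is not enough for that premise; longer cycles through $(\tightplus{i}{1},j)$ or $(\tightplus{j}{1},i)$ must also be ruled out, though the same fact (absence of $\drc{i}$) does the job. None of these repairs rescues the adjacent case, which needs a different construction --- for instance the paper's, which routes the induced conflict through all intermediate direct comparisons rather than through the shortcut edges of $\o_{i,k}$.
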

\begin{proof}	
	Take $\p\in\CHN_V$ and $\drc{i}, \drc{j}, \drc{k}\in\dir_\p$
	such that $\drc{i}\le^\en_\p\drc{j}\le^\en_\p\drc{k}$
	(we can assume that $i<j<k$).
	To show that $\drc{i}\le^\en_\p\drc{j}$,
	take $\o\in\SPO_V$ that contains all direct comparisons in $\p$
	up to $k$, except $\drc{i}$, $\drc{j}$ and $\drc{k}$, 
	plus the comparison $(\tightplus{k}{1},i)$. 
	In other words,
	$\o$ is such that if $\drc{i}$, $\drc{j}$ and $\drc{k}$ were added to it,
	a cycle would form.
	The first step involves showing that $\drc{k}\notin\p\en\o$.	
	To see why this is the case, 
	note first that, 
	by design, not all of $\drc{i}$, $\drc{j}$ and $\drc{k}$ can be in $\p\en\o$,
	i.e., at least one of them must be left out. 
	We now do a case analysis
	to show that,
	either way, $\drc{k}$ ends up being left out.
	
	\emph{Case 1}. If $\drc{k}\notin\p\en\o$, the conclusion is immediate.
	
	\emph{Case 2}. If $\drc{j}\notin\p\en\o$, then we can safely add $\drc{i}$ to $\p\en\o$:
	this is because the inference of 
	the opposite comparison, i.e., 
	$(i{+}1,i)$, can be done
	only by adding all comparisons on the path from $i{+}1$ to $i$, and the absence
	of $\drc{j}$ means this inference is blocked. Using $\ppp{3-4}$ we can now conclude that
	$((\p\en\o)\cup\{\drc{i}\})^+=\p\en(\o\cup\{\drc{i}\})^+$
	(see Figure \ref{fig:acyclic-proof}).
	Note, we can separate $\o\cup\{\drc{i}\}$ into $\o_{j,k}=\{(k{+}1,j), (j{+}1,k)\}$ and 
	all the comparisons on the path from $k{+}1$ to $j$, plus the comparisons on the path
	from $j{+}1$ to $k$. Call this latter preference $\o'$.
	We thus have that $(\o\cup\{\drc{i}\})^+=(\o_{j,k}\cup\o')^+$
	and, applying $\ppp{3}$, we obtain that:
	$$
	\p\en(\o\cup\{\drc{i}\})^+ = \p\en(\o_{j,k}\cup\o')^+\subseteq((\p\en\o_{j,k})\cup\o')^+.
	$$		
	Since, by definition, $\drc{k}\notin\p\en\o_{j,k}$ and $\drc{k}\notin\o'$,
	It follows that: 
	$$\drc{k}\notin\p\en(\o\cup\{\drc{i}\})^+,$$
	then: 
	$$\drc{k}\notin((\p\en\o)\cup\{\drc{i}\})^+,$$
	and, finally, that: 
	$$\drc{k}\notin\p\en\o.$$
	
	\emph{Case 3}. 
	If $\drc{i}\notin\p\en\o$, then we can safely add $\drc{k}$ to $\p\en\o$
	and, by reasoning similar to above, show that 
	$\drc{j}\notin\p\en\o$. Here we invoke Case 2.
	
	With the fact that $\drc{k}\notin\p\en\o$ in hand, 
	we can add $\drc{j}$ to $\p\en\o$ (by reasoning similar to above),
	because the path from $j{+1}$ to $j$ in $\p\en\o$ is blocked by the absence of $\drc{k}$.
	Using postulates $\ppp{3-4}$, we conclude that:
		\begin{align*}
			((\p\en\o)\cup\{\drc{j}\})^+ &=\p\en(\o\cup\{\drc{j}\})^+\\
			&= \p\en(\{(i{+}1,\dots,k),(k{+1},i)\})^+\\
			&=((\p\en\o_{i,k})\cup\{(i{+}1,\dots,k)\})^+.
		\end{align*}
	Since $\drc{k}\notin((\p\en\o)\cup\{\drc{j}\})^+$, we conclude that 
	$\drc{k}\notin\p\en\o_{i,k}$,
	which implies that $\drc{i}\le^\en_\p\drc{k}$.
\end{proof}

\noindent 
Lemma \ref{lem:revealed-preference-relation-transitive} 
is crucial for the following 
representation result, 
as it indicates that we can identify the revealed preference relation 
with the underlying preference over direct comparisons of $\p$ driving revision.

\begin{theorem}\label{thm:repr-ties-rl}
	If $\en$ is a revision operator satisfying postulates $\ppp{1-4}$,
	for any $\p\in\CHN_V$ and $\o,\o_1,\o_2\in\SPO_V$ such that $\o_1$, $\o_2$ are coordinated 
	with respect to $\p$,
	then there exists a preference assignment $\as$ 
	such that $\en$ is the $\as$-induced revision operator.
\end{theorem}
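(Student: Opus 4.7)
The natural candidate is the preference assignment $\as$ defined by setting $\as(\p)=\le^\en_\p$, where $\le^\en_\p$ is the revealed preference relation from Lemma~\ref{lem:revealed-preference-relation-transitive}, extended arbitrarily from $\dir_\p$ to all of $V\times V$ (the operator $\add_{\le^\en_\p}^\ast$ only consults the restriction to $\dir_\p$). To see that $\as$ is a valid preference assignment, I would verify that $\le^\en_\p$ is a total preorder on $\dir_\p$: transitivity is exactly Lemma~\ref{lem:revealed-preference-relation-transitive}, and for totality, given any pair of direct comparisons $\drc{k}$ and $\drc{l}$, I apply $\en$ to the choice-inducing preference $\o_{k,l}$. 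By $\ppp{1}$ the result is a $\p$-completion, and since adding both $\drc{k}$ and $\drc{l}$ to $\o_{k,l}$ would close a cycle, at least one of them is absent from $\p\en\o_{k,l}$, so at least one of $\drc{k}\le^\en_\p\drc{l}$ or $\drc{l}\le^\en_\p\drc{k}$ holds.

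With $\as$ in hand, the remaining task is to show $\p\en^\as\o=\p\en\o$ for every $\o\in\SPO_V$. Both sides are $\p$-completions of $\o$: the left-hand side by construction of $\add_{\le^\en_\p}^\ast$, the right-hand side by $\ppp{1}$. Both also contain $(\o\cup\cycfree{\p}{\o})^+$: the former directly from the definition of $\add_{\le^\en_\p}^0$, the latter by $\ppp{2}$. Hence the two operators coincide iff they select the same direct comparisons from $\cyc{\p}{\o}$.

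I would establish this agreement by induction on the $\le^\en_\p$-level. The inductive hypothesis states that for every level $j<i$, the comparisons of $\lvl^j_{\le^\en_\p}(\dir_\p)\cap\cyc{\p}{\o}$ occurring in $\p\en\o$ are exactly those added by $\add_{\le^\en_\p}^j(\o)$. For the step, I fix a level-$i$ comparison $c$ and argue that $c\in\p\en\o$ iff $c$ can be consistently added to the output of $\add_{\le^\en_\p}^{i-1}(\o)$. The forward direction uses $\ppp{1}$ (forcing the result to be an spo); the reverse direction extracts pairwise information from revisions by suitable $\o_{k,l}$ and transports it to the revision by $\o$ using $\ppp{3}$ and $\ppp{4}$, applied to decompositions of $\o$ together with previously selected comparisons into a choice-inducing preference and additional material that is coordinated with respect to $\p$.

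The main obstacle is this inductive step. The decompositions must be chosen so that coordination with respect to $\p$ holds---otherwise $\ppp{3}$ and $\ppp{4}$ do not apply---so that no unwanted direct comparisons creep in via the required transitive closures, and so that ties at level $i$ (where several comparisons stand or fall together) are handled jointly rather than pairwise. Combining several instances of $\o_{k,l}$ via $\ppp{3}$ and $\ppp{4}$ to enforce this joint behavior is where I expect the bulk of the technical work to sit.
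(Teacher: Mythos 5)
Your setup coincides with the paper's own construction: take $\as(\p)=\le^\en_\p$, get transitivity from Lemma~\ref{lem:revealed-preference-relation-transitive}, and note that both $\p\en\o$ and $\add^\ast_{\le^\en_\p}(\o)$ are $\p$-completions of $\o$ containing $(\o\cup\cycfree{\p}{\o})^+$, so the two can only differ on which elements of $\cyc{\p}{\o}$ they retain. Your totality argument for $\le^\en_\p$ via $\o_{k,l}$ and $\ppp{1}$ is correct and is a point the paper leaves implicit. But the proposal stops exactly where the proof has to happen: the biconditional ``$c\in\p\en\o$ iff $c$ can be consistently added on top of $\add^{i-1}_{\le^\en_\p}(\o)$'' is the whole content of the theorem, and you leave its justification as anticipated ``technical work'' --- you never construct the decomposition into a choice-inducing preference plus side material, never identify which comparisons do the blocking, and you explicitly raise, without resolving, the questions (coordination of the decomposition, joint treatment of ties) on which that transport argument stands or falls. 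As written, this is a strategy outline, not a proof.

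For comparison, the paper discharges this step by a direct contradiction argument rather than a level-by-level induction. Suppose $\drc{j}\in\p\en\o$ (with $\drc{j}$ on level $i$) but $\drc{j}\notin\add^\ast_{\le_\p}(\o)$. The blocking cycle cannot consist of $\o$-comparisons alone (else $(\tightplus{j}{1},j)\in\o$, contradicting $\drc{j}\in\p\en\o$), so it uses some direct comparison $\drc{k}$ on a level $l\le i$ with $\drc{k}\notin\p\en\o$. Because $\drc{k}$ is absent, the path comparisons $\o'$ from $\tightplus{k}{1}$ to $j$ and from $\tightplus{j}{1}$ to $k$ can be safely added to $\p\en\o$; rewriting $(\o\cup\o')^+=(\o_{j,k}\cup\o')^+$ and applying $\ppp{3}$--$\ppp{4}$ gives $((\p\en\o)\cup\o')^+\subseteq((\p\en\o_{j,k})\cup\o')^+$, and since $\drc{j}\in\p\en\o$ and $\drc{j}\notin\o'$, it follows that $\drc{j}\in\p\en\o_{j,k}$, i.e., $\drc{j}<_\p\drc{k}$, so $\drc{j}$ lies strictly below $\drc{k}$ --- contradicting $l\le i$. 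Non-direct comparisons are then reduced to the direct comparisons generating them, and the converse inclusion is argued symmetrically. Something of this specificity --- a concrete choice of $\drc{k}$ and $\o'$, a verified application of $\ppp{3}$--$\ppp{4}$, and a contradiction with the level structure --- is what your inductive step needs and currently lacks.
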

\begin{proof}
	For any $\p\in\CHN_V$, take $\le_\p$ to be the revealed preference relation $\le^\en_\p$.
	By Lemma \ref{lem:revealed-preference-relation-transitive}, we know that 
	$\le_\p$ is transitive, 
	so the only thing left to is show is that $\p\en\o = \add^\ast_{\le_\p}(\o)$.
	We do this in two steps.
	
	(``$\subseteq$'') 
	For one direction, 
	Take $(j,k)\in\p\en\o$ and suppose 
	$(j,k)\notin\add^\ast_{\le_\p}(\o)$.
	Clearly, it cannot be the case that $(j,k)\in\o$, so we conclude that $(j,k)$
	is either a direct comparison of $\p$, or is inferred by transitivity using direct comparisons in $\p$
	and $\o$. 	
	
	\emph{Case 1}. If $(j,k)\in\dir_\p$, then we can write $(j,k)$ as $\drc{j}$,
	Suppose that $\drc{j}$ is on level $i$ of $\dir_\p$: this means that if $\drc{j}$ does not get added to 
	$\add^\ast_{\le_\p}(\o)$ at step $i$, then, since it cannot be inferred by transitivity,
	it does not get added at all. The fact that $\drc{j}\notin\add^\ast_{\le_\p}(\o)$ thus means that
	$\drc{j}$ forms a cycle with some comparisons in $\o$ and comparisons in $\p$ on levels $l\le i$.
	First, note that $\drc{j}$ cannot form a cycle with elements of $\o$ only, since that would imply
	that $(j{+}1,j)\in\o$ and that would exclude the possibility that $\drc{j}\in\p\en\o$.
	Thus, at least one other comparison in the cycle must come from $\p$.
	We can state, now, that, since $(j{+}1,j)\in\p\en\o$, then at least one of these comparisons 
	must be absent in $\p\en\o$,
	i.e., there exists a direct comparison $\drc{k}\in\dir_\p$ such that $\drc{k}\in\lvl^j_{\le_\p}(\p)$,
	for some $j\le i$, 
	$\drc{k}\notin\p\en\o$ and $\drc{j}$, $\drc{k}$, plus some other comparisons in $\o$ and $\p$ form a cycle.
	This means that it is safe to add $\o'$ to $\p\en\o$, where $\o'$ contains all comparisons on the path from $\tightplus{k}{1}$ to $j$,	plus the comparison on the path from $j{+}1$ to $k$.
	We can rewrite $\o'$ by separating out $(\tightplus{k}{1},j)$ and $(j{+}1,k)$, i.e., $\o'=(\o_{j,k}\cup\o')^+$.
	Applying postulates $\ppp{3-4}$, we now get that
	\begin{align*}
	((\p\en\o)\cup\o')^+ &= \p\en(\o\cup\o')^+\\
					   &= \p\en(\o_{j,k}\cup\o')^+\\
					   &\subseteq ((\p\en\o_{j,k})\cup\o')^+.						   
	\end{align*}
	Using the assumption that $\drc{j}\in\p\en\o$ and the fact that $\drc{j}\notin{\o'}$, 
	we can thus infer that $\drc{j}\in\p\en\o_{j,k}$.
	This, in turn, implies that $\drc{j}<_\p\drc{k}$ and hence $\drc{j}$ belongs to 
	a lower level of $\dir_\p$ than $\drc{k}$: but this contradicts the conclusion drawn earlier
	that $\drc{k}$ belongs to a level $l\le i$, where $i$ is the level of $\drc{j}$. 

	\emph{Case 2}. 
	If $(j,k)$ is not a direct comparison of $\p$, then it is inferred from some direct
	comparisons of $\p$ that end up in $\p\en\o$,
	together with comparisons in $\o$. We can now apply the reasoning from Case 1
	to the direct comparisons of $\p$ that go into inferring $(j,k)$, to show that they must be in 
	$\add^\ast_{\le_\p}(\o)$. 
	This, in turn, implies that $(j,k)$ will be in $\add^\ast_{\le_\p}(\o)$
	as well.
	
	The reasoning for the other direction is similar.
\end{proof}

\noindent
Theorems \ref{thm:repr-ties-lr} and \ref{thm:repr-ties-rl} 
describe preference revision operators that 
rely on total preorders $\le_\p$ on $\dir_\p$, where a tie between two direct comparisons 
means that if they cannot both be added, then they are both passed over.
We can eliminate this indeciseveness by using \emph{linear} orders on $\dir_\p$ instead of
preorders: this ensures that any two direct comparisons of $\p$ can be clearly ranked with respect
to each other, and that a revision operator is always in a position to choose among them.
On the postulate site, linear orders can be characterized by tightening the notion
of a $\p$-completion and, with it, postulate $\ppp{1}$.
Thus, a \emph{decisive $\p$-completion of $\o$} is
defined as: 
$$
\upperd{\o}_\p = \{(\o\cup\d)^+\in\SPO_V\mid\emptyset\subset\d\subseteq\dir_\p\}.
$$
Changing the format of the revision output requires changing the postulate that speaks about this format as well.
The decisive version of $\ppp{1}$ is then written, for any $\p\in\CHN_V$ and $\o\in\SPO_V$, as:
\begin{description}
	\item[($\ppp{\mathsf{D}}$)] $\p\en\o\in\upperd{\o}_\p$. 
\end{description}

A \emph{decisive preference assignment $\as$} is a 
function $\as\colon\CHN_V\rightarrow\CHN_{V\times V}$ mapping every $\p\in\CHN_V$ to a linear preorder $<_\p$ on $\dir_\p$. We can now show the following result.

\begin{theorem}\label{thm:repr-decisive}
	A revision operator $\en$ satisfies postulates $\ppp{\mathsf{D}}$ and $\ppp{2-4}$ 
	if and only if
	there exists a decisive preference assignment $a$ such that, for any $\p\in\CHN_V$
	and $\o,\o_1,\o_2\in\SPO_V$ such that $\o_1$, $\o_2$ are coordinated 
	with respect to $\p$, $\en$ is the $\as$-induced preference revision operator.
\end{theorem}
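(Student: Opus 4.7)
The proof parallels Theorems \ref{thm:repr-ties-lr} and \ref{thm:repr-ties-rl}, with the decisive setting requiring linear orders in place of total preorders.

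For the ($\Leftarrow$) direction, a decisive preference assignment is in particular a preference assignment, since a linear order is a special case of a total preorder; thus the induced operator satisfies $\ppp{2}$--$\ppp{4}$ immediately by Theorem \ref{thm:repr-ties-lr}. What remains is $\ppp{\mathsf{D}}$. Assuming $\upperd{\o}_\p$ is non-empty, fix any non-empty $\d\subseteq\dir_\p$ with $(\o\cup\d)^+\in\SPO_V$ and suppose, for contradiction, that the greedy addition procedure returns nothing from $\dir_\p$. The key observation is that any cycle in $\bigl(\add^0_{<_\p}(\o)\cup\{c\}\bigr)^+$ cannot involve edges from $\cycfree{\p}{\o}$, since the reverses of such edges lie outside $(\p\cup\o)^+$ and hence cannot close a cycle within a subset of that closure. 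So whenever $c$ is rejected, $(\o\cup\{c\})^+$ must itself fail to be an SPO; but any $c\in\d$ satisfies $(\o\cup\{c\})^+\subseteq(\o\cup\d)^+\in\SPO_V$, contradicting rejection.

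For the ($\Rightarrow$) direction, since $\ppp{\mathsf{D}}$ strengthens $\ppp{1}$, the operator $\en$ satisfies $\ppp{1}$--$\ppp{4}$, and Theorem \ref{thm:repr-ties-rl} supplies a preference assignment inducing $\en$ via total preorders. The task is to refine this into a decisive one by extracting a linear order directly from $\en$'s behavior. My plan is to use, for each pair of adjacent direct comparisons $\drc{k},\drc{k+1}$, the sharpened choice-inducing preference $\o^\ast_k=\{(k+2,k)\}\cup\{(i+1,i):i\in\{1,\dots,|V|-1\}\setminus\{k,k+1\}\}$. A direct check shows that $\o^\ast_k$ is itself an SPO, that no direct comparison of $\p$ outside $\{\drc{k},\drc{k+1}\}$ is compatible with it, and that these two cannot coexist with $\o^\ast_k$; hence $\upperd{\o^\ast_k}_\p$ has exactly two members, and $\ppp{\mathsf{D}}$ forces $\en$ to single one out, yielding a strict order on the adjacent pair. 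A transitivity argument in the spirit of Lemma \ref{lem:revealed-preference-relation-transitive} combines these local pieces into a linear order $<_\p$ on $\dir_\p$, after which the identity $\en=\en^\as$ for the resulting decisive assignment is verified along the lines of Theorem \ref{thm:repr-ties-rl}.

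The main obstacle is bridging the adjacent-pair extraction with the full behavior of $\en$: non-adjacent pairs $\drc{k},\drc{l}$ with $l>k+1$ are not directly handled by the $\o^\ast_k$ construction, since the natural analogue would conflict with the path edge $(k+1,l)$ inside $\o_{k,l}$. For those pairs the order must either be inferred transitively from adjacent choices or lifted from the preorder supplied by Theorem \ref{thm:repr-ties-rl}. Showing that this inferred order is uniquely consistent with $\en$'s action on all inputs---and in particular that $\ppp{\mathsf{D}}$ (combined with $\ppp{3}$--$\ppp{4}$ under coordination) forces the tie-breaks in Theorem \ref{thm:repr-ties-rl}'s preorder to agree with those extracted from the $\o^\ast_k$---is the delicate step of the argument.
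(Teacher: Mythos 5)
Your left-to-right direction is fine and follows the paper's route: a decisive assignment is a special preference assignment, so $\ppp{2-4}$ come from Theorem~\ref{thm:repr-ties-lr}, and your cycle-free argument for $\ppp{\mathsf{D}}$ (a rejected comparison at the first successful-addition opportunity must already be inconsistent with $\o$ alone) is a legitimate filling-in of a detail the paper compresses into ``minimal adjustments.''

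The right-to-left direction, however, has a genuine gap, and it is not just the one you flag at the end. The paper does not introduce a new pair-isolating input: it reuses the revealed relation $\le^\en_\p$ of Section~\ref{sec:representation}, defined via the choice-inducing preference $\o_{k,l}$ for \emph{every} pair $\drc{k},\drc{l}\in\dir_\p$, invokes Lemma~\ref{lem:revealed-preference-relation-transitive} for transitivity, and argues that under $\ppp{\mathsf{D}}$ the operator can no longer discard both $\drc{k}$ and $\drc{l}$ when revising by $\o_{k,l}$, so $\le^\en_\p$ is linear; the verification that $\en$ coincides with the induced operator is then the one from Theorem~\ref{thm:repr-ties-rl}. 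Your replacement plan cannot be completed as described. First, the adjacent-pair data extracted from $\o^\ast_k$ does not determine a linear order by transitivity: with $\dir_\p=\{c_1,c_2,c_3\}$, the outcomes ``$c_1$ beats $c_2$'' and ``$c_3$ beats $c_2$'' leave $c_1$ versus $c_3$ completely open, and that missing comparison genuinely matters, since revising by the paper's $\o_{1,3}$ (Figure~\ref{fig:choice-inducing-preference}) forces exactly that choice, so different completions induce different operators. Second, the fallback of ``lifting tie-breaks from the preorder supplied by Theorem~\ref{thm:repr-ties-rl}'' does not work: if the revealed preorder has a genuine tie between $\drc{k}$ and $\drc{l}$, then $\en$ discards both on input $\o_{k,l}$, whereas any decisive assignment's induced operator retains the better of the two on that very input; so no arbitrary linearization can reproduce $\en$. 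What is actually needed --- and what constitutes the substantive content of this direction --- is an argument that $\ppp{\mathsf{D}}$ (together with $\ppp{2-4}$) excludes the both-discarded outcome for $\o_{k,l}$, for arbitrary, including non-adjacent, pairs; your proposal neither supplies this nor reduces to it, and by your own admission the reconciliation step is left open.
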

\begin{proof}
	The proofs for Theorems \ref{thm:repr-ties-lr} and \ref{thm:repr-ties-rl} 
	work here with minimal adjustments.
	Note that when choosing between two direct comparisons, postulate $\ppp{\mathsf{D}}$ does not allow $\en$ 
	to be indifferent anymore. This means that the revealed preference relation on $\dir_\p$ ends up being linear.
\end{proof}

\noindent
We can see, thus, that what seems like a weakness in the original formulation of the problem,
i.e., the mismatch in type between the input (a chain) and the output (an spo) of a revision operator,
can be resolved by requiring the ranking on comparisons to be strict.
However, in the present setup this amounts to a less general result, which is why we presented 
our work in this manner.

\section{Concrete Preference Revision Operators}\label{sec:concrete-operator}
Theorems \ref{thm:repr-ties-lr}, 
\ref{thm:repr-ties-rl} and \ref{thm:repr-decisive} articulate an important lesson:
preference revision performed in a principled manner, 
i.e., in accordance with $\ppp{1-4}$ or $\ppp{\mathsf{D}}$ and $\ppp{2-4}$,
involves having preferences over comparisons.
Thus, to obtain concrete operators one must look at ways of 
ranking the comparisons in a preference $\p$.
We sketch here two simple solutions, as proof of concept.

The \emph{trivial assignment $\as^\mathrm{t}$} is defined by taking: 
$$
\drc{i}\approx^\mathrm{t}_\p\drc{j},$$ 
while the \emph{lexicographic assignment $\as^\mathrm{lex}$} is defined by taking: 
$$
\drc{i}<^\mathrm{lex}_\p\drc{j},
$$ 
if $i<j$, for any $\p\in\CHN_V$ and $\drc{i},\drc{j}\in\p$.
Intuitively, the trivial assignment makes all direct comparisons of $\p$ equally desirable, 
while the lexicographic assignment orders them in lexicographic order.

These assignments induce the \emph{trivial} and \emph{lexicographic} operators $\en^\mathrm{t}$ and $\en^\mathrm{lex}$, respectively.
It is straightforward to see that $\le_{\p}^\mathrm{t}$ is a preorder and $<^\mathrm{lex}_\p$ is a linear order, 
prompting the following result.

\begin{proposition}\label{prop:trivial-lex}
	The operator $\en^\mathrm{t}$ satisfies postulates $\ppp{1-4}$.
	The operator $\en^\mathrm{lex}$ satisfies postulates $\ppp{\mathsf{D}}$ and $\ppp{2-4}$.	
\end{proposition}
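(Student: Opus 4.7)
The strategy is to invoke Theorems \ref{thm:repr-ties-lr} and \ref{thm:repr-decisive}, which reduce the satisfaction of the postulates to a verification that each assignment maps every chain $\p$ to an order of the appropriate kind on $\dir_\p$. Thus the proof reduces to two routine checks.

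For $\en^\mathrm{t}$, I would first argue that $\as^\mathrm{t}$ is a valid preference assignment, i.e.\ that $\le^\mathrm{t}_\p$ is a total preorder on $\dir_\p$. Since by definition $\drc{i}\approx^\mathrm{t}_\p\drc{j}$ for every pair of direct comparisons in $\p$, the induced relation is reflexive, transitive, and total (any two elements are comparable via equivalence). The assignment can be extended in any consistent way to the rest of $V\times V$, since only its restriction to $\dir_\p$ is used by the addition operator. Hence $\as^\mathrm{t}$ is a preference assignment in the sense of Section \ref{sec:semantics}, and Theorem \ref{thm:repr-ties-lr} immediately yields that $\en^\mathrm{t}=\en^{\as^\mathrm{t}}$ satisfies $\ppp{1-4}$ (under the coordination hypothesis on $\o_1$ and $\o_2$ stated there).

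For $\en^\mathrm{lex}$, I would verify that $<^\mathrm{lex}_\p$ is a strict linear order on $\dir_\p$. Since $\p=(i_1,\dots,i_m)\in\CHN_V$, every direct comparison of $\p$ is uniquely determined by its first coordinate, so the prescription $\drc{i}<^\mathrm{lex}_\p\drc{j}\iff i<j$ is well defined; it inherits irreflexivity, transitivity, and totality from the natural order on the distinct first coordinates appearing in $\dir_\p$. Thus $\as^\mathrm{lex}$ is a decisive preference assignment, and Theorem \ref{thm:repr-decisive} then delivers that $\en^\mathrm{lex}=\en^{\as^\mathrm{lex}}$ satisfies $\ppp{\mathsf{D}}$ together with $\ppp{2-4}$.

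There is no genuine obstacle: the claim is effectively a corollary of the representation theorems of Section \ref{sec:representation}. The only minor subtlety worth spelling out is ensuring that in each case the resulting relation has the right structural type (preorder versus linear order) on the set $\dir_\p$ that actually drives the addition operator, so that the corresponding representation theorem applies as stated.
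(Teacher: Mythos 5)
Your proposal is correct and follows essentially the same route as the paper, which offers no separate proof beyond noting that $\le^\mathrm{t}_\p$ is a total preorder and $<^\mathrm{lex}_\p$ a linear order and then appealing to the representation results (Theorems \ref{thm:repr-ties-lr} and \ref{thm:repr-decisive}). Your explicit checks of the structural type of each assignment, including the caveat about the coordination hypothesis, are exactly the intended argument.
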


The following example illustrates that the two operators can give different results on the same input.

\begin{example}\label{ex:trivial-lexi}
	For $\p$ and $\o$ as in Example \ref{ex:preferences-3-items}, 
	the trivial operator ranks all direct comparisons of $\p$, i.e., $(1,2)$ and $(2,3)$, equally, 
	and hence either adds all or none of them to $\o$. 
	Since adding both leads to a cycle, it ends up adding none and hence
	$\p\en^\mathrm{t}\o=(3,1)$. 
	
	The lexicographic assignment ranks $(1,2)$ as better than $(2,3)$, and hence 
	adds $(1,2)$ after which it runs out of options, i.e., $\p\en^\mathrm{lex}\o=(3,1,2)$.
\end{example}

\section{Conclusion}\label{sec:conclusion}
We have presented a model of preference change
according to which revising a preference $\p$ 
goes hand in hand with having preferences over the comparisons of $\p$,
thereby providing a rigorous formal treatment to intuitions found elsewhere 
in the literature
\cite{Sen77,Grune-YanoffH09b}.
Interestingly, the postulates describing preference revision 
are analogous to existing postulates offered for propositional enforcement \cite{HaretWW18}, 
an operation used to model changes in Abstract Argumentation Frameworks (AFs) \cite{Dung1995}.

Our treatment unearthed interesting aspects of preference revision,
such as the issue of coordination between successive instances 
of new preference information (Section \ref{sec:postulates})
and the non-obvious solution to the question of how to rank two comparisons relative
to each other (Section \ref{sec:representation}).
These aspects are taken for granted in regular propositional revision, but prove key
to successful application of revision to the more specialized context of transitive relations
on a set of items, i.e., preference orders.
In this respect, preference revision is akin to revision for fragments of propositional logic
\cite{DelgrandePW18,CreignouHPW18}, and raises the possibility of exporting this approach
to other formalisms in this family. The addition procedure in particular,
lends itself to application in other formalisms by slight tweaking of the acceptance condition,
and could thus supply some interesting lessons for revision in general, 
in particular to revision-like operators for specialized formalisms, such as that of AFs, mentioned above.

There is also ample space for future work with respect to the present framework itself.
To facilitate exposition of the main ideas we imposed certain restrictions on the primary notions.
Lifting these restrictions would yield broader results that would potentially cover more ground and apply
to a more diverse set of inputs. We can consider, for instance, revising strict partial orders in general
(not just linear orders), and using rankings that involve all comparisons of the initial preference order
(not just the direct ones).
As the space of possibilities becomes larger, the choice problems on this space become increasingly more complex
as well. Finding the right conditions under which the choice mechanism corresponds to a set of appealing postulates
requires a delicate balance of many elements, and holds the promise for interesting results.

\clearpage

\bibliographystyle{kr}
\bibliography{bibliography}

\end{document}